\def\be{{\mathbf e}}
\def\bd{{\mathbf d}}
\def\bk{{\mathbf k}}
\def\bw{{\mathbf w}}
\def\bx{{\mathbf x}}
\def\by{{\mathbf y}}
\def\bz{{\mathbf z}}
\def\bX{{\mathbf X}}
\def\bZ{{\mathbf Z}}
\def\bzero{{\mathbf 0}}
\def\bones{{\mathbf 1}}
\newcommand{\balpha}{\mbox{\boldmath$\alpha$}}
\newcommand{\reals}{{\mathbb{R}}}
\DeclareMathOperator*{\argmin}{arg\,min}
\newtheorem{theorem}{Theorem}
\newtheorem{lemma}{Lemma}
\newtheorem{corollary}{Corollary}
\newtheorem{definition}{Definition}
\begin{document}

\title{Robust Kernel Density Estimation}

\author{JooSeuk Kim$^1$ and Clayton D. Scott$^{1,2}$\\
$^1$Electrical Engineering and Computer Science, $^2$Statistics\\
University of Michigan, Ann Arbor, MI 48109-2122 USA\\
email: \texttt{\{stannum, clayscot\}@umich.edu} }


\maketitle

\begin{abstract}
We propose a method for nonparametric density estimation that exhibits robustness to contamination of the training sample.
This method achieves robustness by combining a traditional kernel density estimator (KDE) with ideas from classical $M$-estimation.
We interpret the KDE based on a radial, positive semi-definite kernel as a sample mean in the associated reproducing kernel Hilbert space. Since the sample mean is sensitive to outliers, we estimate it robustly via $M$-estimation, yielding a robust kernel density
estimator (RKDE).

An RKDE can be computed efficiently via a kernelized iteratively re-weighted least squares (IRWLS) algorithm. Necessary and sufficient conditions are given for kernelized IRWLS to converge to the global minimizer of the $M$-estimator objective function.
The robustness of the RKDE is demonstrated with a representer theorem, the influence function, and experimental results for density estimation and anomaly detection.
\end{abstract}
\noindent\textsc{Keywords}: {outlier, reproducing kernel feature space, kernel trick, influence function, $M$-estimation}

\section{Introduction}\label{sec:intro}
The kernel density estimator (KDE) is a well-known nonparametric estimator
of univariate or multivariate densities, and numerous articles have been
written on its properties, applications, and extensions
\citep{silve86,scottdw92}. However, relatively little work has been done
to
understand or improve the KDE in situations where the training sample is
contaminated. This paper addresses a method of nonparametric density
estimation that generalizes the KDE, and exhibits robustness to
contamination of the training sample.
\let\thefootnote\relax\footnote{Shorter versions of this work previously
appeared at the International Conference on Acoustics, Speech, and Signal
Processing \citep{jkim08} and the International Conference on Machine
Learning \citep{kim11}.}

Consider training data following a contamination model
\begin{equation}
\label{eqn:contam}
\bX_1, \dots, \bX_n \stackrel{iid}{\sim} (1-p) f_0 + p f_1,
\end{equation}
where $f_0$ is the ``nominal" density to be estimated, $f_1$ is the density of the contaminating distribution, and $p < \frac12$ is the proportion of contamination. Labels are not available, so that the problem is unsupervised. The objective is to estimate $f_0$ while making no parametric assumptions about the nominal or contaminating distributions.

Clearly $f_0$ cannot be recovered if there are {\em no} assumptions on $f_0, f_1$ and $p$.  Instead, we will focus on a set of nonparametric conditions that are reasonable in many practical applications.  In particular, we will assume that, relative to the nominal data, the contaminated data are
\begin{description}
\item[(a)] {\em outlying}: the densities $f_0$ and $f_1$ have relatively
little overlap
\item[(b)] {\em diffuse}: $f_1$ is not too spatially concentrated relative
to $f_0$
\item[(c)] {\em not abundant}: a minority of the data come from $f_1$
\end{description}
Although we will not be stating these conditions more precisely, they capture the intuition behind the quantitative results presented below.

As a motivating application, consider anomaly detection in a computer
network. Imagine that several multi-dimensional measurements $\bX_1,
\dots, \bX_n$ are collected.  For example, each $\bX_i$ may record the
volume of traffic along certain links in the network, at a certain instant
in time \citep{chhabra08}. If each measurement is collected when the
network is in a nominal state, these data could be used to construct an
anomaly detector by first estimating the density $f_0$ of nominal
measurements, and then thresholding that estimate at some level to obtain
decision regions. Unfortunately, it is often difficult to know that the
data are free of anomalies, because assigning labels (nominal vs.
anomalous) can be a tedious, labor intensive task. Hence, it is necessary
to estimate the nominal density (or a level set thereof) from contaminated
data. Furthermore, the distributions of both nominal and anomalous
measurements are potentially complex, and it is therefore desirable to
avoid parametric models.

The proposed method achieves robustness by combining a traditional kernel
density estimator with ideas from $M$-estimation \citep{huber64,hampel74}.
The KDE based on a radial, positive semi-definite (PSD) kernel is
interpreted as a sample mean in the reproducing kernel Hilbert space
(RKHS) associated with the kernel. Since the sample mean is sensitive to
outliers, we estimate it robustly via $M$-estimation, yielding a robust
kernel density estimator (RKDE). We describe a kernelized iteratively
re-weighted least squares (KIRWLS) algorithm to efficiently compute the
RKDE, and provide necessary and sufficient conditions for the convergence
of KIRWLS to the RKDE.

We also offer three arguments to support the claim that the RKDE robustly
estimates the nominal density and its level sets. First, we characterize
the RKDE by a representer theorem. This theorem shows that the RKDE is a
weighted KDE, and the weights are smaller for more outlying data points.
Second, we study the influence function of the RKDE, and show through an
exact formula and numerical results that the RKDE is less sensitive to
contamination by outliers than the KDE. Third, we conduct experiments on
several benchmark datasets that demonstrate the improved performance of
the RKDE, relative to competing methods, at both density estimation and
anomaly detection.

One motivation for this work is that the traditional kernel density
estimator is well-known to be sensitive to outliers. Even without
contamination, the standard KDE tends to overestimate the density in
regions where the true density is low.  This has motivated several authors
to consider variable kernel density estimators (VKDEs), which employ a
data-dependent bandwidth at each data point \citep{breiman77, abramson82,
terrell92}.  This bandwidth is adapted to be larger where the data are
less dense, with the aim of decreasing the aforementioned bias. Such methods have
been applied in outlier detection and computer vision
applications \citep{meer01, latecki07}, and are one
possible approach to robust nonparametric density estimation. We compare
against these methods in our experimental study.

Density estimation with positive semi-definite kernels has been studied by
several authors. \citet{vapnik00supportvector} optimize a criterion based
on the empirical cumulative distribution function over the class of
weighted KDEs based on a PSD kernel. \citet{shawetaylor07} provide a
refined theoretical treatment of this approach. \citet{gretton08moment}
adopt a different criterion based on Hilbert space embeddings of
probability distributions. Our approach is somewhat similar in that we
attempt to match the mean of the empirical distribution in the RKHS, but
our criterion is different. These methods were also not designed with contaminated data in mind.

We show that the standard kernel density estimator can be viewed as the
solution to a certain least squares problem in the RKHS.  The use of
quadratic criteria in density estimation has also been previously
developed.  The aforementioned work of Song et al. optimizes the
norm-squared in Hilbert space, whereas \citet{kimthesis, girolami03,
kim10pami, gray11cake} adopt the integrated squared error. Once again, these methods are not designed for contaminated data.


Previous work combining robust estimation and kernel methods has focused
primarily on supervised learning problems. $M$-estimation applied to
kernel regression has been studied by various authors
\citep{christmann07,debruyne08a,debruyne08b,zhu08,wibowo,brabanter}.
Robust surrogate losses for kernel-based classifiers have also been
studied \citep{schurmanns}. In unsupervised learning, a robust way of
doing kernel principal component analysis, called spherical KPCA, has been
proposed, which applies PCA to feature vectors projected onto a unit
sphere around the spatial median in a kernel feature space
\citep{hubert10}. The kernelized spatial depth was also proposed
to estimate depth contours nonparametrically \citep{yixin09}. To
our knowledge, the RKDE is the first application of $M$-estimation ideas
in kernel density estimation.


In Section \ref{sec:rkde} we propose robust kernel density estimation. In
Section \ref{sec:representer} we present a representer theorem for the
RKDE. In Section \ref{sec:kirwls} we describe the KIRWLS algorithm and its
convergence. The influence function is developed in Section \ref{sec:ic},
and experimental results are reported in Section \ref{sec:experiment}.
Conclusions are offered in Section \ref{sec:conclusion}. Section
\ref{sec:proofs} contains proofs of theorems. Matlab code implementing our
algorithm is available at \url{www.eecs.umich.edu/~cscott}.

\section{Robust Kernel Density Estimation}\label{sec:rkde}
Let $\bX_1, \dots, \bX_n \in \reals^d$ be a random sample from a distribution $F$ with a density $f$. The
kernel density estimate of $f$, also called the Parzen window estimate, is a nonparametric estimate given by
\begin{equation*}
\widehat{f}_{KDE}\left(\bx\right) = \frac{1}{n} \sum_{i = 1}^{n} k_\sigma\left(\bx , \bX_i\right)
\end{equation*}
where $k_\sigma$ is a kernel function with bandwidth $\sigma$. To ensure that $\widehat{f}_{KDE}(\bx)$ is a density, we assume the kernel function satisfies
$k_\sigma(\,\cdot\,, \,\cdot\,) \geq 0$ and $\int k_\sigma\left(\bx, \,\cdot\,\right) \, d\bx = 1$. We will also assume that $k_\sigma(\bx, \bx')$ is {\em radial}, in that $k_\sigma(\bx, \bx') = g(\|\bx - \bx'\|^2)$ for some $g$.

In addition, we require that $k_\sigma$ be {\em positive semi-definite}, which means that
the matrix
$(k_\sigma(x_i,x_j))_{1 \le i,j \le m}$ is positive semi-definite for all
positive integers $m$ and all $x_1, \ldots, x_m \in \reals^d$. For radial kernels, this is equivalent to the condition that $g$ is completely monotone, i.e.,
\begin{gather*}
(-1)^k \frac{d^k}{dt^k} g(t) \geq 0, \quad \mbox{for all $k \ge 1, t > 0$},\\
\lim_{t \to 0} g(t) = g(0),
\end{gather*}
and to the assumption that there exists a finite Borel measure $\mu$ on $\reals^+ \triangleq [0, \infty)$ such that
\begin{equation*}
k_\sigma(\bx, \bx') = \int \exp\bigl(-t^2\|\bx - \bx'\|^2\bigr) d\mu(t).
\end{equation*}
See \citet{scovel10}. Well-known examples of kernels satisfying all of the above properties are the Gaussian kernel
\begin{equation}\label{eqn:gaussian}
k_\sigma(\bx, \bx') = \biggl(\frac{1}{\sqrt{2\pi}\sigma}\biggr)^{d} \exp\biggl(-\frac{\|\bx-\bx'\|^2}{2\sigma^2}\biggr),
\end{equation}
the multivariate Student kernel
\begin{equation*}
k_\sigma(\bx, \bx') = \biggl(\frac{1}{\sqrt{\pi}\sigma}\biggr)^d\cdot\frac{\Gamma\bigl((\nu+d)/2\bigr)}{\Gamma(\nu/2)}\cdot\biggl(1+\frac{1}{\nu}\cdot\frac{\|\bx-\bx'\|^2}{\sigma^2}\biggr)^{-\frac{\nu+d}{2}},
\end{equation*}
and the Laplacian kernel
\begin{equation*}
k_\sigma(\bx, \bx') = \frac{c_d}{\sigma^d}\exp\biggl(-\frac{\|\bx-\bx'\|}{\sigma}\biggr)
\end{equation*}
where $c_d$ is a constant depending on the dimension $d$ that ensures $\int k_\sigma\left(\bx, \,\cdot\,\right) \, d\bx= 1$. The PSD assumption does, however, exclude several common kernels for density estimation, including those with finite support.

It is possible to associate every PSD kernel with a feature map and a Hilbert space.  Although there are many ways to do this, we will consider the following canonical construction. Define $\Phi(\bx) \triangleq
k_\sigma(\cdot,\bx)$, which is called the {\em canonical feature map} associated
with $k_\sigma$. Then define the Hilbert space of functions ${\cal H}$ to be the completion of the span of $\{\Phi(\bx) \, : \, \bx \in \reals^d\}$. This space is known as the reproducing kernel Hilbert space (RKHS) associated with $k_\sigma$. See \citet{steinwart08} for a thorough treatment of PSD kernels and RKHSs. For our purposes, the critical property of ${\cal H}$ is the so-called {\em reproducing property}. It states that for all $g \in {\cal H}$ and all $\bx \in \reals^d$, $g(\bx) = \langle \Phi(\bx), g \rangle_{{\cal H}}$. As a special case, taking $g = k_\sigma( \cdot, \bx')$, we obtain
$$
k(\bx,\bx') = \langle \Phi(\bx), \Phi(\bx') \rangle
$$
for all $\bx, \bx' \in \reals^d$. Therefore, the kernel evaluates the inner product of its arguments after they have been transformed by $\Phi$.

For radial kernels, $\|\Phi(\bx)\|_\mathcal{H}$ is constant since
\begin{equation*}
\|\Phi(\bx)\|_\mathcal{H}^2 = \langle \Phi(\bx), \Phi(\bx) \rangle_\mathcal{H} = k_\sigma(\bx, \bx) = k_\sigma(\bzero, \bzero).
\end{equation*}
We will denote $\tau = \|\Phi(\bx)\|_\mathcal{H}$.

From this point of view, the KDE can be expressed as
\begin{align*}
\widehat{f}_{KDE}(\cdot) & = \frac{1}{n} \sum_{i = 1}^{n} k_\sigma(\cdot, \bX_i) \\
& = \frac{1}{n} \sum_{i = 1}^{n} \Phi(\bX_i),
\end{align*}
the sample mean of the $\Phi(\bX_i)$'s. Equivalently, $\widehat{f}_{KDE} \in {\cal H}$ is the solution of
\begin{equation*}
\min_{g \in \mathcal{H}} \sum_{i=1}^n \|\Phi(\bX_i) - g\|_{\mathcal{H}}^2.
\end{equation*}

Being the solution of a least squares problem, the KDE is sensitive to the presence of outliers among the $\Phi(\bX_i)$'s.
To reduce the effect of outliers, we propose to use $M$-estimation \citep{huber64} to find a robust sample mean of the $\Phi(\bX_i)$'s. For a robust loss function $\rho(x)$ on $x \geq 0$, the robust kernel density estimate is defined as
\begin{equation}\label{eqn:rkde}
\widehat{f}_{RKDE} = \argmin_{g \in \mathcal{H}}\sum_{i=1}^n \rho\bigl(\|\Phi(\bX_i) - g\|_{\mathcal{H}}\bigr).
\end{equation}
Well-known examples of robust loss functions are Huber's or Hampel's $\rho$. Unlike the quadratic loss, these loss functions have the property that
$\psi \triangleq \rho'$ is bounded. For Huber's $\rho$, $\psi$ is given by
\begin{equation}
\label{eqn:huber}
\psi\left(x\right) =
\begin{cases}
x, &\, 0 \leq x \leq a\\
a, &\, a < x.
\end{cases}
\end{equation}
and for Hampel's $\rho$,
\begin{equation}
\label{eqn:hampel}
\psi(x) =
\begin{cases}
x,& 0\leq x < a\\
a,& a\leq x < b\\
a\cdot(c-x)/(c-b),& b\leq x < c\\
0,& c\leq x.
\end{cases}
\end{equation}
The functions $\rho(x), \psi(x)$, and $\psi(x)/x$ are plotted in Figure
\ref{fig:rho_psi}, for the quadratic, Huber, and Hampel losses. Note that
while $\psi(x)/x$ is constant for the quadratic loss, for Huber's or
Hampel's loss, this function is decreasing in $x$. This is a desirable
property for a robust loss function, which will be explained later in
detail. While our examples and experiments employ Huber's and Hampel's
losses, many other losses can be employed.

\begin{figure}[!tb]
\centering
\begin{minipage}[htb]{0.49\linewidth}
    \centering
    \includegraphics[width=1.0\linewidth]{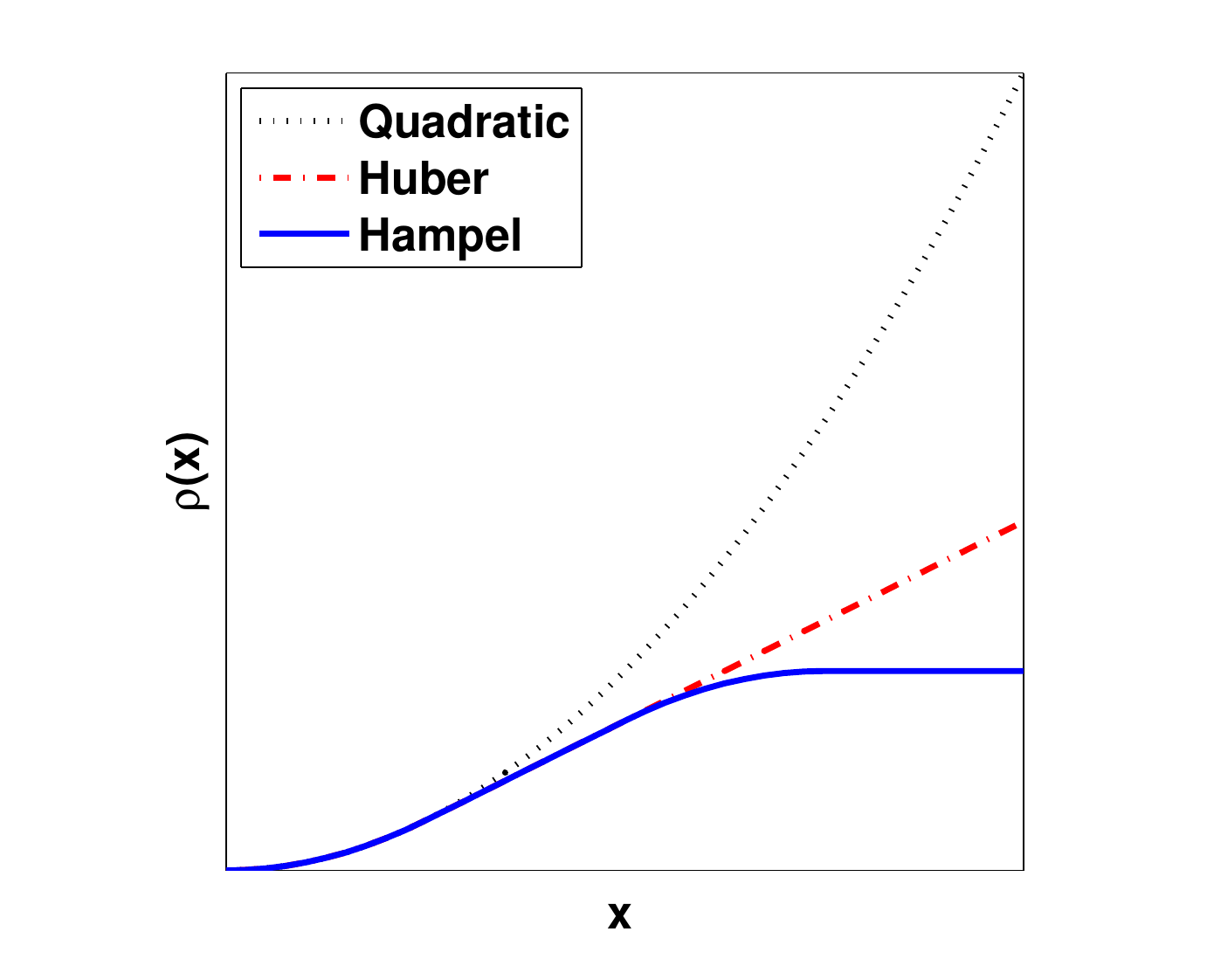}
    \centerline{(a) $\rho$ functions}
\end{minipage}
\hfill
\begin{minipage}[htb]{0.49\linewidth}
    \centering
    \includegraphics[width=1.0\linewidth]{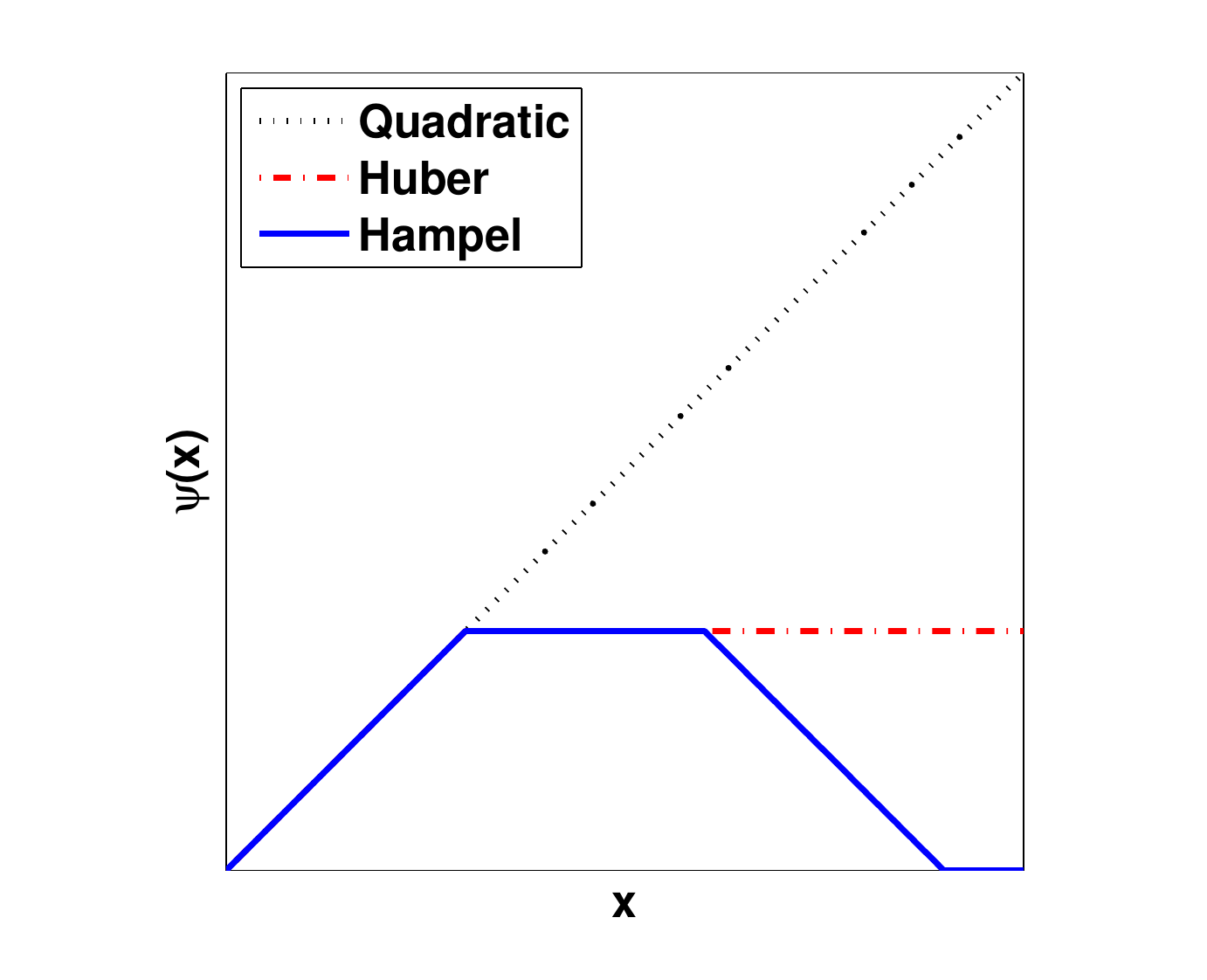}
    \centerline{(b) $\psi$ functions}
\end{minipage}
\begin{minipage}[htb]{0.49\linewidth}
    \centering
    \includegraphics[width=1.0\linewidth]{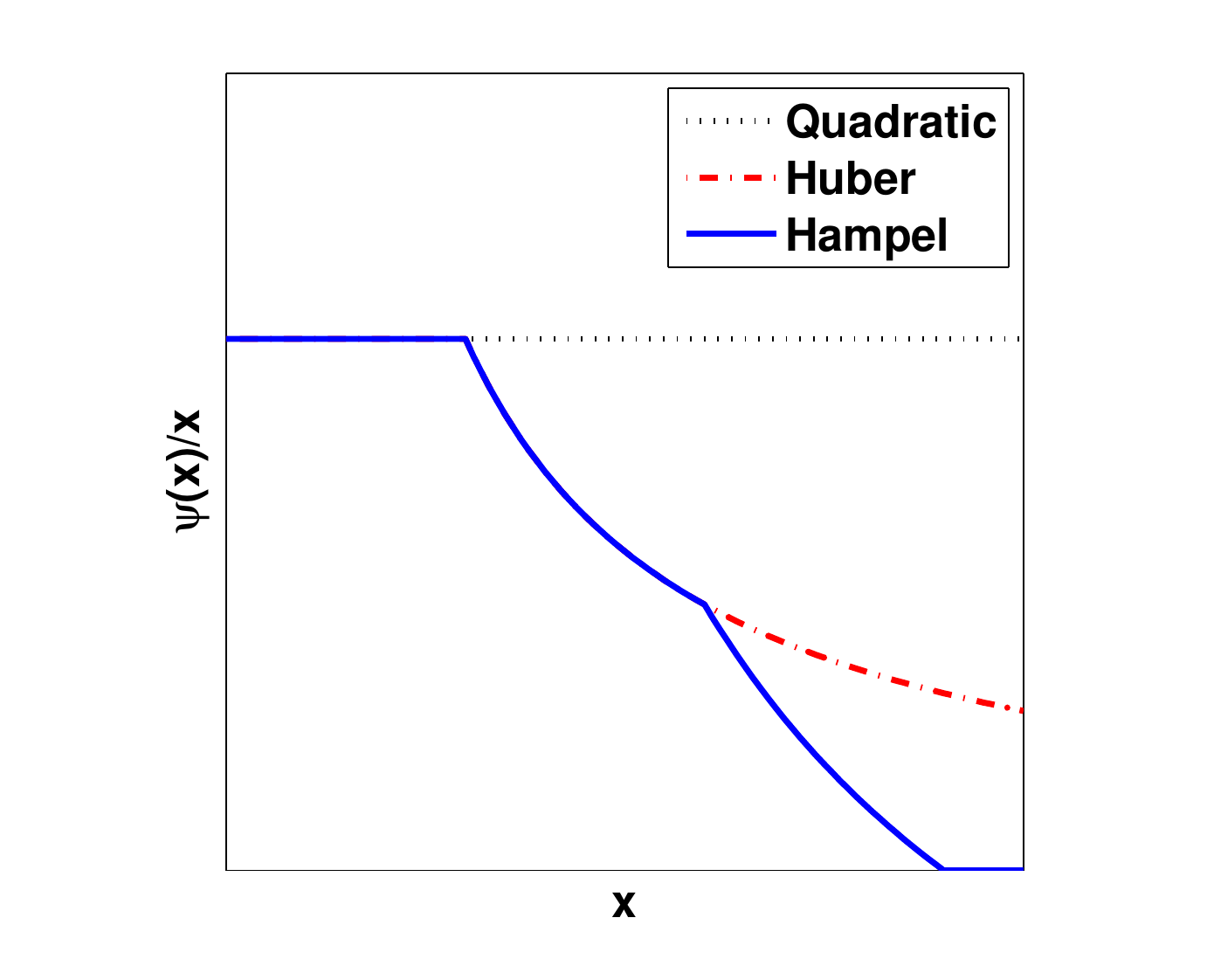}
    \centerline{(c) $\psi(x)/x$}
\end{minipage}
\caption{The comparison between three different $\rho(x)$, $\psi(x)$, and $\psi(x)/x$: quadratic, Huber's, and Hampel's.}\label{fig:rho_psi}
\end{figure}

We will argue below that $\widehat{f}_{RKDE}$ is a valid density, having
the form $\sum_{i=1}^n w_i k_\sigma(\cdot, \bX_i)$ with weights $w_i$ that
are nonnegative and sum to one. To illustrate the estimator, Figure
\ref{fig:2d} (a) shows a contour plot of a Gaussian mixture distribution
on $\reals^2$. Figure \ref{fig:2d} (b) depicts a contour plot of a KDE
based on a training sample of size $200$ from the Gaussian mixture. As we
can see in Figure \ref{fig:2d} (c) and (d), when $20$ contaminating data
points are added,
the KDE is significantly altered in low density regions, while the RKDE is
much less affected.

\begin{figure}[!tb]
\centering
\begin{minipage}[htb]{.49\linewidth}
  \centering
  \includegraphics[width=0.8\linewidth]{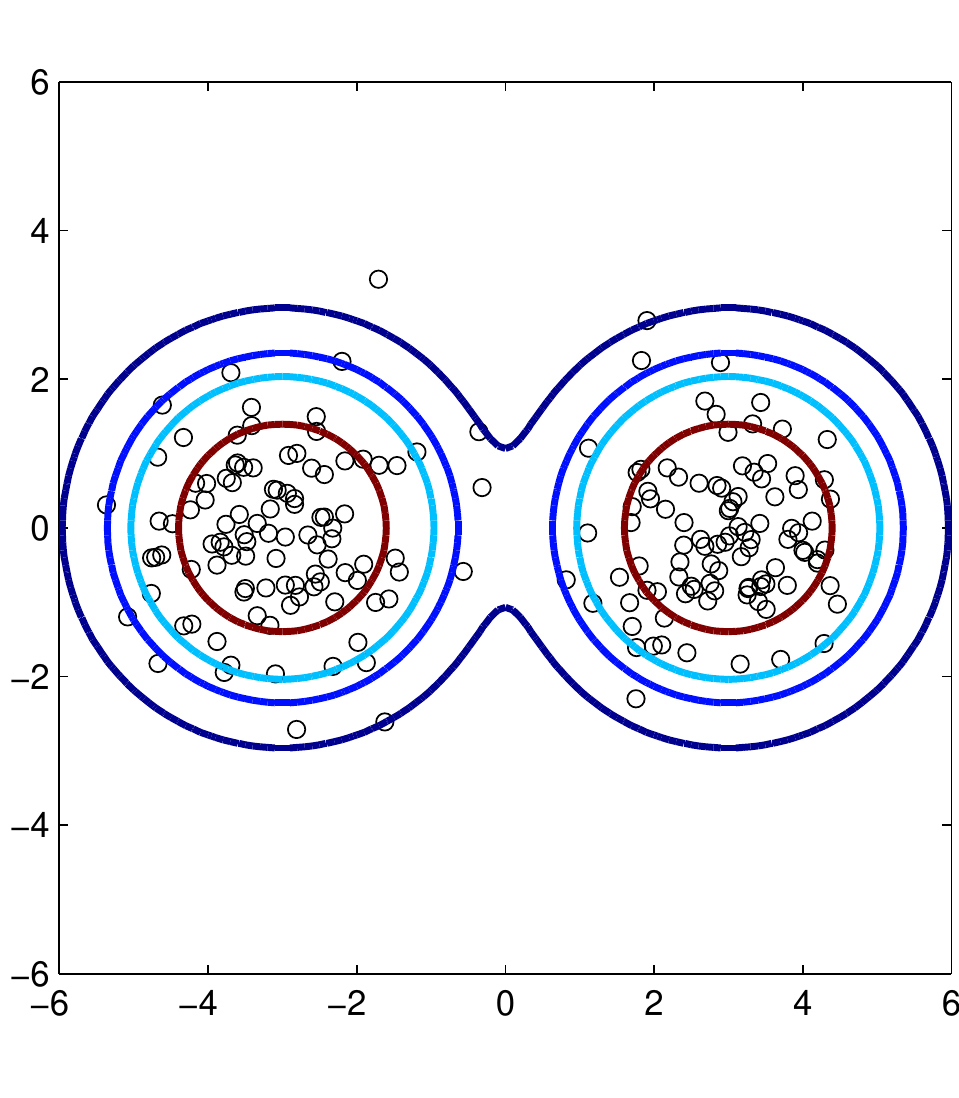}
  \centerline{(a) True density}\medskip
\end{minipage}
\hfill
\begin{minipage}[htb]{.49\linewidth}
  \centering
  \includegraphics[width=0.8\linewidth]{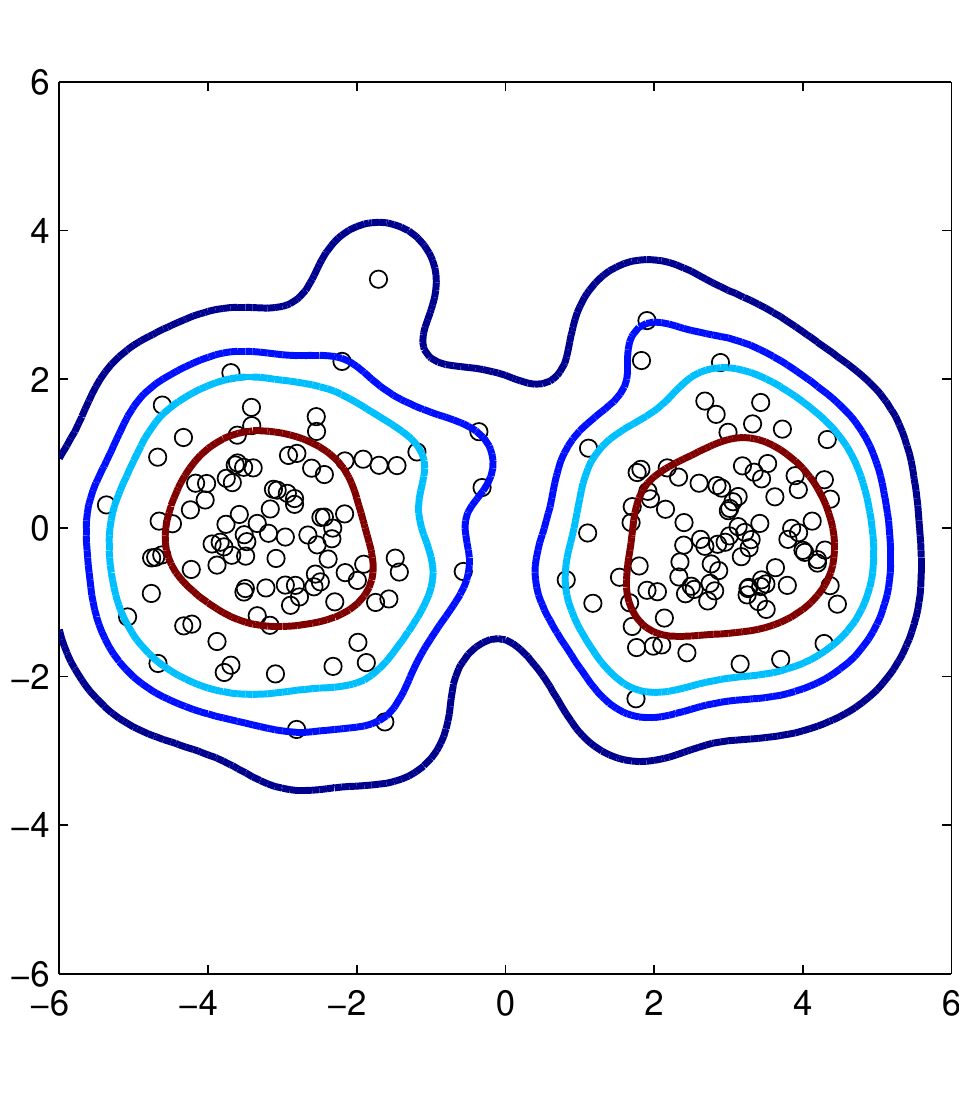}
  \centerline{(b) KDE without outliers}\medskip
\end{minipage}
\centering
\begin{minipage}[htb]{.49\linewidth}
  \centering
  \includegraphics[width=0.8\linewidth]{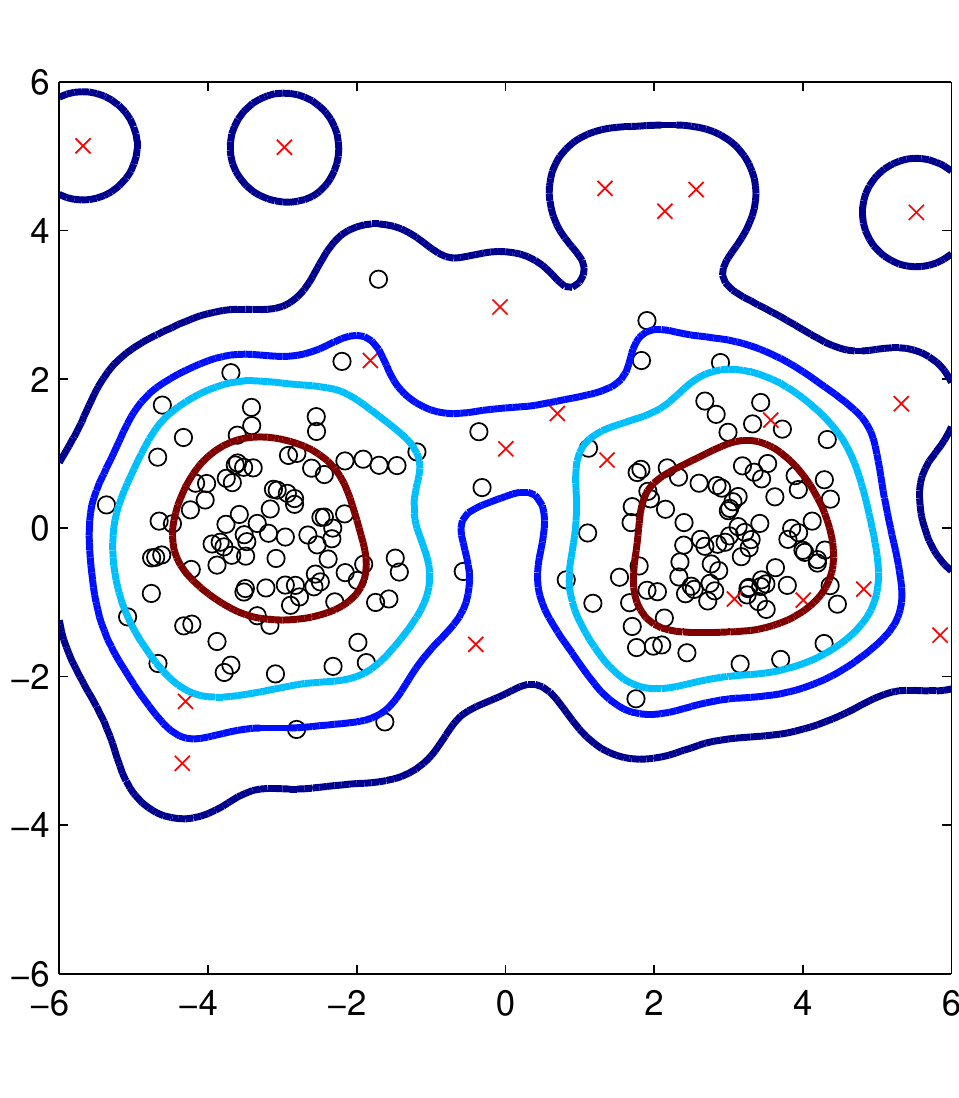}
  \centerline{(c) KDE with outliers}\medskip
\end{minipage}
\hfill
\begin{minipage}[htb]{.49\linewidth}
  \centering
  \includegraphics[width=0.8\linewidth]{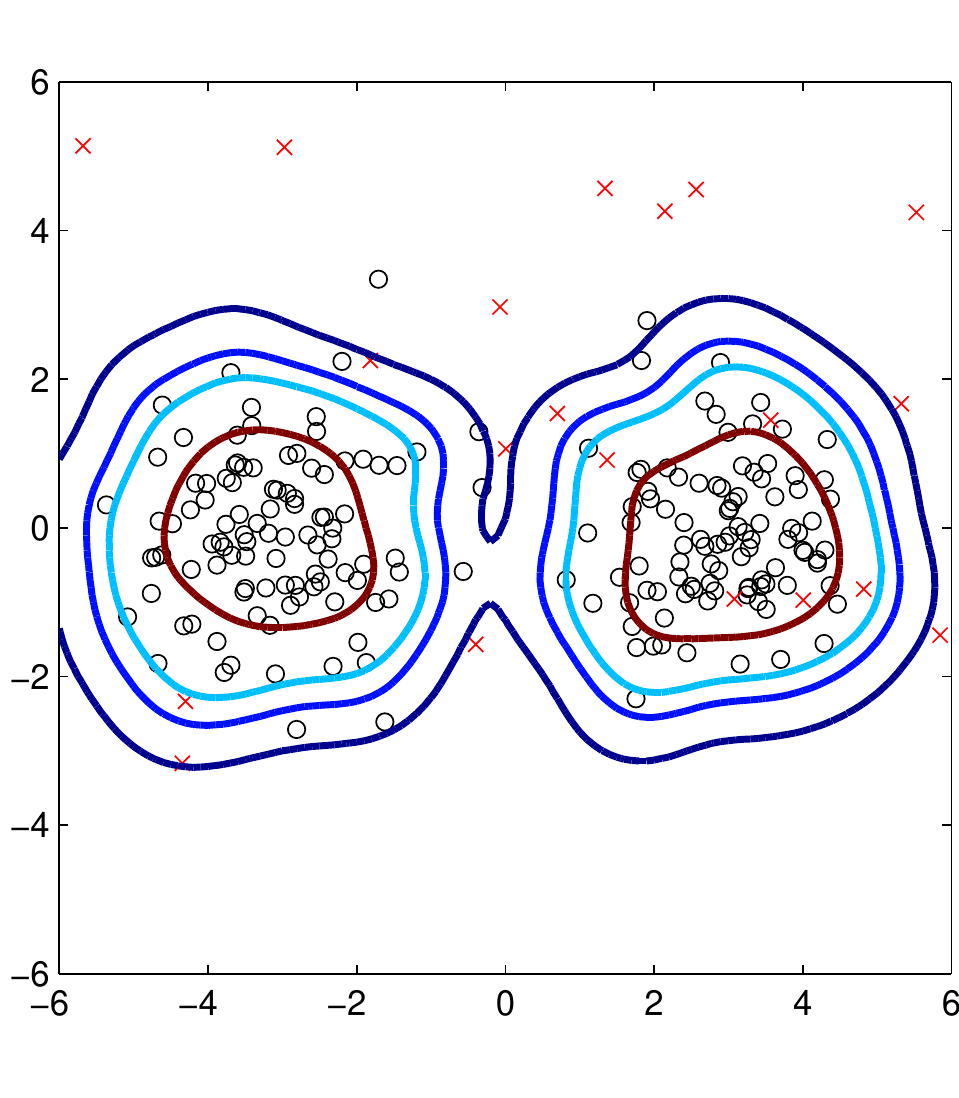}
  \centerline{(d) RKDE with outliers}\medskip
\end{minipage}
\caption{Contours of a nominal density and kernel density estimates along
with
data samples from the nominal density (o) and contaminating density (x).
$200$ points
are from the nominal distribution and $20$ contaminating points
are from a uniform
distribution.} \label{fig:2d} \end{figure}

Throughout this paper, we define $\varphi(x) \triangleq \psi(x)/x$ and consider the following assumptions on $\rho$, $\psi$, and $\varphi$:
\begin{itemize}
    \item [(A1)] $\rho$ is non-decreasing, $\rho(0) = 0$, and $\rho(x)/x \to 0$ as $x \to 0$
    \item [(A2)] $\varphi(0) \triangleq \lim_{x \to 0} \frac{\psi(x)}{x}$ exists and is finite
    \item [(A3)] $\psi$ and $\varphi$ are continuous
    \item [(A4)] $\psi$ and $\varphi$ are bounded
    \item [(A5)] $\varphi$ is Lipschitz continuous
\end{itemize}
which hold for Huber's and Hampel's losses, as well as several others.

\section{Representer Theorem}\label{sec:representer}

In this section, we will describe how $\widehat{f}_{RKDE}(\bx)$ can be
expressed as a weighted combination of the $k_\sigma(\bx,\bX_i)$'s. A
formula for the weights explains how a robust sample mean in ${\cal H}$
translates to a robust nonparametric density estimate. We also present
necessary and sufficient conditions for a function to be an RKDE. From
(\ref{eqn:rkde}), $\widehat{f}_{RKDE} = \argmin_{g \in \mathcal{H}} J(g)$,
where
\begin{equation}\label{eqn:optimization}
J(g) = \frac{1}{n} \sum_{i=1}^n \rho(\|\Phi(\bX_i) - g\|_\mathcal{H}).
\end{equation}

First, let us find necessary conditions for $g$ to be a minimizer of $J$. Since the space over which we are optimizing $J$ is a Hilbert space, the necessary conditions are characterized through Gateaux differentials of $J$. Given a vector space $\mathcal{X}$ and a function $T: \mathcal{X} \to \reals$, the Gateaux differential of $T$ at $x \in \mathcal{X}$ with incremental $h \in \mathcal{X}$ is defined as
\begin{equation*}
\delta T(x; h) = \lim_{\alpha \to 0} \frac{T(x+ \alpha h) - T(x)}{\alpha}.
\end{equation*}
If $\delta T(x_0; h)$ is defined for all $h \in \mathcal{X}$, a necessary condition for $T$ to have a minimum at $x_0$ is that $\delta T(x_0; h) = 0$ for all $h \in \mathcal{X}$ \citep{david97}. From this optimality principle, we have the following lemma.

\begin{lemma}\label{lemma:v_mu}
Suppose assumptions (A1) and (A2) are satisfied. Then the Gateaux differential of $J$ at $g \in \mathcal{H}$ with incremental $h \in \mathcal{H}$ is
\begin{align*}
\delta J(g ; h) = -\bigl\langle V(g), h \bigr \rangle_\mathcal{H}
\end{align*}
where $V: \mathcal{H} \to \mathcal{H}$ is given by
\begin{equation*}\label{eqn:V}
V(g) = \frac{1}{n} \sum_{i=1}^n \varphi(\|\Phi(\bX_i) - g\|_{\mathcal{H}}) \cdot \bigl(\Phi(\bX_i) - g\bigr).
\end{equation*}
A necessary condition for $g = \widehat{f}_{RKDE}$ is $V(g) = \bzero$.
\end{lemma}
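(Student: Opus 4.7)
The plan is to differentiate $J$ term by term. For each $i$, let $T_i(g) = \rho(\|\Phi(\bX_i)-g\|_{\mathcal{H}})$ and set $u_i(\alpha) = \|\Phi(\bX_i)-g-\alpha h\|_{\mathcal{H}}$. Since the RKHS norm-squared is smooth in $\alpha$, I would first write
\[
u_i(\alpha)^2 = \|\Phi(\bX_i)-g\|_{\mathcal{H}}^2 - 2\alpha\langle \Phi(\bX_i)-g,h\rangle_{\mathcal{H}} + \alpha^2\|h\|_{\mathcal{H}}^2
\]
and then handle two cases.

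In the \emph{regular} case $\|\Phi(\bX_i)-g\|_{\mathcal{H}}>0$, the map $\alpha \mapsto u_i(\alpha)$ is differentiable at $\alpha=0$ with $u_i'(0) = -\langle \Phi(\bX_i)-g,h\rangle_{\mathcal{H}}/\|\Phi(\bX_i)-g\|_{\mathcal{H}}$. Applying the chain rule (legal since $\psi=\rho'$ is continuous by (A3), which is needed when I sum later, but here I only need differentiability of $\rho$ at the point $u_i(0)>0$, which follows from $\psi$ being defined and continuous there),
\[
\frac{d}{d\alpha}T_i(g+\alpha h)\Big|_{\alpha=0} = \psi(u_i(0))\cdot u_i'(0) = -\varphi(\|\Phi(\bX_i)-g\|_{\mathcal{H}})\,\langle \Phi(\bX_i)-g,h\rangle_{\mathcal{H}}.
\]
In the \emph{degenerate} case $\Phi(\bX_i)=g$, the chain rule argument breaks, but here $u_i(\alpha) = |\alpha|\|h\|_{\mathcal{H}}$, so $T_i(g+\alpha h)/\alpha = \rho(|\alpha|\|h\|_{\mathcal{H}})/\alpha$. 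By (A1) this tends to $0$ as $\alpha\to 0$, so $\delta T_i(g;h)=0$. On the other hand, by (A2) the quantity $\varphi(0)$ is finite and the proposed formula reduces to $\varphi(0)\langle \bzero,h\rangle_{\mathcal{H}}=0$, so both expressions agree.

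Summing the contributions and pulling the inner product out gives
\[
\delta J(g;h) = -\frac{1}{n}\sum_{i=1}^n \varphi(\|\Phi(\bX_i)-g\|_{\mathcal{H}})\,\langle \Phi(\bX_i)-g,h\rangle_{\mathcal{H}} = -\langle V(g),h\rangle_{\mathcal{H}},
\]
which is the claimed identity. For the necessary condition, if $g$ is a minimizer of $J$, then by the optimality principle cited from \citet{david97} we must have $\delta J(g;h)=0$ for every $h\in\mathcal{H}$, hence $\langle V(g),h\rangle_{\mathcal{H}}=0$ for all $h$; taking $h=V(g)$ yields $V(g)=\bzero$.

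The only delicate step is the degenerate case $\Phi(\bX_i)=g$, where the norm is not differentiable; this is precisely where assumption (A1) (which forces $\rho$ to be flat at $0$) and (A2) (which makes $\varphi(0)$ well-defined) pull their weight, and the rest is a routine chain-rule computation plus linearity of the inner product.
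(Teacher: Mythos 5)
Your proof is correct and follows essentially the same route as the paper's: the same case split between $\Phi(\bX_i)-g\neq\bzero$ (chain rule through the norm) and the degenerate point (direct difference quotient controlled by (A1), with (A2) making the unified formula $\varphi(0)\langle\bzero,h\rangle_{\mathcal{H}}=0$ consistent), followed by summation and identifying the linear functional with $-\langle V(g),\cdot\rangle_{\mathcal{H}}$. The only cosmetic difference is that you differentiate directly at $\alpha=0$ while the paper computes the derivative at general $\alpha$ and then evaluates; nothing substantive changes.
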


Lemma \ref{lemma:v_mu} is used to establish the following representer theorem, so named because $\widehat{f}_{RKDE}$ can be represented as a weighted combination of kernels centered at the data points. Similar results are known for supervised kernel methods \citep{scholkopf01}.
\begin{theorem}\label{thm:representer}
Suppose assumptions (A1) and (A2) are satisfied. Then,
\begin{equation}\label{eqn:representer}
\widehat{f}_{RKDE}(\bx) = \sum_{i=1}^n w_i k_\sigma(\bx, \bX_i)
\end{equation}
where $w_i \geq 0$, $\sum_{i=1}^n w_i =1$. Furthermore,
\begin{equation}\label{eqn:w_i_proto}
w_i \propto \varphi(\|\Phi(\bX_i) - \widehat{f}_{RKDE}\|_{\mathcal{H}}).
\end{equation}
\end{theorem}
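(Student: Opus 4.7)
The plan is to read off the representation directly from the first-order optimality condition supplied by Lemma~\ref{lemma:v_mu}. Since $\widehat{f}_{RKDE}$ minimizes $J$ over the Hilbert space $\mathcal{H}$, Lemma~\ref{lemma:v_mu} yields $V(\widehat{f}_{RKDE}) = \bzero$. Introducing the shorthand $\gamma_i := \varphi(\|\Phi(\bX_i) - \widehat{f}_{RKDE}\|_\mathcal{H})$, this necessary condition rearranges to
\begin{equation*}
\Big(\sum_{i=1}^n \gamma_i\Big)\, \widehat{f}_{RKDE} \;=\; \sum_{i=1}^n \gamma_i\, \Phi(\bX_i),
\end{equation*}
which is essentially a fixed-point identity saying that $\widehat{f}_{RKDE}$ is a convex combination of the $\Phi(\bX_i)$ weighted by the $\gamma_i$'s.

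Assuming $\sum_i \gamma_i > 0$ (treated as a separate sub-step, discussed below), I would define $w_i := \gamma_i / \sum_j \gamma_j$ and then substitute $\Phi(\bX_i) = k_\sigma(\cdot, \bX_i)$ to recover $\widehat{f}_{RKDE}(\bx) = \sum_i w_i k_\sigma(\bx, \bX_i)$, which is exactly \eqref{eqn:representer}. The proportionality $w_i \propto \varphi(\|\Phi(\bX_i) - \widehat{f}_{RKDE}\|_\mathcal{H})$ is true by construction, and $\sum_i w_i = 1$ is immediate. Nonnegativity $w_i \geq 0$ follows from $\gamma_i \geq 0$, which in turn comes from $\varphi \geq 0$: assumption (A1) makes $\rho$ non-decreasing, so $\psi = \rho' \geq 0$, and hence $\varphi(x) = \psi(x)/x \geq 0$.

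The main obstacle is justifying $\sum_i \gamma_i > 0$, i.e.\ ruling out the degenerate case where every residual $\|\Phi(\bX_i) - \widehat{f}_{RKDE}\|_\mathcal{H}$ lies in the region where $\varphi$ vanishes. For Huber's loss $\varphi$ is strictly positive everywhere, so there is nothing to check. For a redescending loss such as Hampel's, $\varphi$ vanishes for $x \geq c$, so one must show the minimizer cannot place all data points beyond that radius. I would argue this by comparison: any $g$ for which all residuals exceed $c$ attains $J(g) = \rho(c)$, while the explicit candidate $g_0 = \tfrac1n \sum_i \Phi(\bX_i) = \widehat{f}_{KDE}$ (or even $g_0 = \Phi(\bX_1)$) yields a strictly smaller value of $J$ for reasonable data configurations, since at least some residuals then fall in the linearly-increasing part of $\rho$. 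Thus a global minimizer must have at least one $\gamma_i > 0$, and the normalization in the definition of $w_i$ is well-defined.

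Beyond this one subtlety, the proof is essentially a direct unpacking of Lemma~\ref{lemma:v_mu}: the Gateaux optimality condition is already the statement that $\widehat{f}_{RKDE}$ equals a $\varphi$-weighted average of the feature vectors $\Phi(\bX_i)$, and the representer form \eqref{eqn:representer} is obtained by recognizing those feature vectors as the kernel sections $k_\sigma(\cdot, \bX_i)$.
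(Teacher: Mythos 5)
Your proof takes essentially the same route as the paper's: both read the representation directly off the stationarity condition $V(\widehat{f}_{RKDE}) = \bzero$ from Lemma~\ref{lemma:v_mu}, solve for $\widehat{f}_{RKDE}$ as the normalized $\varphi$-weighted average of the $\Phi(\bX_i)$, and deduce $w_i \geq 0$ from $\rho$ non-decreasing. The only difference is that you flag the degeneracy $\sum_i \gamma_i = 0$ for redescending losses (which the paper silently ignores); your comparison argument there is only sketched, but it is a genuine subtlety worth noting rather than a flaw in your approach.
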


It follows that $\widehat{f}_{RKDE}$ is a density. The representer theorem also gives the following interpretation of the RKDE. If $\varphi$ is decreasing, as is the case for a robust loss, then $w_i$ will be small when $\|\Phi(\bX_i) - \widehat{f}_{RKDE}\|_{\mathcal{H}}$ is large. Now for any $g \in \mathcal{H}$,
\begin{align*}
\|\Phi(\bX_i) - g\|_{\mathcal{H}}^2 & = \langle\Phi(\bX_i) - g, \Phi(\bX_i) - g \rangle_\mathcal{H}\\
& = \|\Phi(\bX_i)\|_{\mathcal{H}}^2 - 2 \langle \Phi(\bX_i), g \rangle_\mathcal{H} + \|g\|_{\mathcal{H}}^2\\
& = \tau^2 - 2 g(\bX_i) + \|g\|_{\mathcal{H}}^2.
\end{align*}
Taking $g = \widehat{f}_{RKDE}$, we see that $w_i$ is small when $\widehat{f}_{RKDE}(\bX_i)$ is small. Therefore, the RKDE is robust in the sense that it down-weights outlying points.

Theorem \ref{thm:representer} provides a necessary condition for $\widehat{f}_{RKDE}$ to be the minimizer of (\ref{eqn:optimization}). With an additional assumption on $J$, this condition is also sufficient.
\begin{theorem}\label{thm:sufficient}
Suppose that assumptions (A1) and (A2) are satisfied, and $J$ is strictly convex. Then (\ref{eqn:representer}), (\ref{eqn:w_i_proto}), and $\sum_{i=1}^n w_i = 1$ are sufficient for $\widehat{f}_{RKDE}$ to be the minimizer of (\ref{eqn:optimization}).
\end{theorem}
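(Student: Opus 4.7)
The plan is to reduce the theorem to verifying that the stated conditions imply the stationarity equation $V(\widehat{f}_{RKDE}) = \bzero$ from Lemma \ref{lemma:v_mu}, and then to promote this stationarity condition to global minimization using the strict convexity of $J$. So the proof would proceed in two steps: first an algebraic/RKHS manipulation, then a standard convex-analysis argument.

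For the first step, I would show that the representer form (\ref{eqn:representer}) together with (\ref{eqn:w_i_proto}) and $\sum_i w_i = 1$ force $V(\widehat{f}_{RKDE}) = \bzero$. Writing $\alpha_i \triangleq \varphi(\|\Phi(\bX_i) - \widehat{f}_{RKDE}\|_{\mathcal{H}})$, the proportionality and normalization conditions uniquely determine $w_i = \alpha_i/\sum_j \alpha_j$, provided $\sum_j \alpha_j > 0$ (a mild nondegeneracy condition that I would verify separately; e.g., it fails only in the trivial case where $\varphi$ vanishes at every residual norm $\|\Phi(\bX_i) - \widehat{f}_{RKDE}\|_{\mathcal{H}}$). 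Using (\ref{eqn:representer}) in the RKHS form $\widehat{f}_{RKDE} = \sum_i w_i \Phi(\bX_i)$ and clearing denominators yields
\begin{equation*}
\sum_{i=1}^n \alpha_i \bigl(\Phi(\bX_i) - \widehat{f}_{RKDE}\bigr) = \bzero,
\end{equation*}
which is exactly $n V(\widehat{f}_{RKDE}) = \bzero$. Lemma \ref{lemma:v_mu} then gives $\delta J(\widehat{f}_{RKDE}; h) = -\langle V(\widehat{f}_{RKDE}), h\rangle_{\mathcal{H}} = 0$ for every $h \in \mathcal{H}$.

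For the second step, I would turn this first-order stationarity into a global minimum using convexity. Given any $g \in \mathcal{H}$, define $\eta(\alpha) \triangleq J\bigl(\widehat{f}_{RKDE} + \alpha(g - \widehat{f}_{RKDE})\bigr)$ on $\reals$. Convexity of $J$ makes $\eta$ convex with derivative $\eta'(0) = \delta J(\widehat{f}_{RKDE}; g - \widehat{f}_{RKDE}) = 0$, so $\eta$ attains its minimum at $\alpha = 0$; evaluating at $\alpha = 1$ gives $J(g) \geq J(\widehat{f}_{RKDE})$. Strict convexity upgrades this to a strict inequality whenever $g \neq \widehat{f}_{RKDE}$, so $\widehat{f}_{RKDE}$ is the unique minimizer of $J$.

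The main subtlety I anticipate is dispatching the nondegeneracy assumption $\sum_j \alpha_j > 0$ cleanly, since otherwise the weight formula (\ref{eqn:w_i_proto}) is ill-posed; everything else is routine once Lemma \ref{lemma:v_mu} has identified the Gateaux differential in terms of $V$.
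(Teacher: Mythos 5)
Your proof is correct and follows the route the paper intends: the paper in fact states Theorem \ref{thm:sufficient} without writing out a proof, relying on exactly the two observations you make --- that the conditions (\ref{eqn:representer}), (\ref{eqn:w_i_proto}), and $\sum_i w_i = 1$ force $V(\widehat{f}_{RKDE}) = \bzero$, hence $\delta J(\widehat{f}_{RKDE}; h) = 0$ for all $h$ by Lemma \ref{lemma:v_mu}, and that a stationary point of a strictly convex functional is its unique global minimizer. The nondegeneracy worry you flag is harmless: since $\rho$ is non-decreasing, $\varphi(x) = \psi(x)/x \geq 0$, so if $\sum_j \varphi(\|\Phi(\bX_j) - \widehat{f}_{RKDE}\|_{\mathcal{H}}) = 0$ then every term of $V(\widehat{f}_{RKDE})$ vanishes individually and stationarity holds anyway.
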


Since the previous result assumes $J$ is strictly convex, we give some simple conditions that imply this property.
\begin{lemma}\label{lemma:strict_convex}
$J$ is strictly convex provided either of the following conditions is satisfied:
\begin{itemize}
\item [(i)] $\rho$ is strictly convex and non-decreasing.
\item [(ii)] $\rho$ is convex, strictly increasing, $n \geq 3$, and $K = (k_\sigma(\bX_i, \bX_j))_{i, j=1}^n$ is positive definite.
\end{itemize}
\end{lemma}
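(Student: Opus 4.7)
The plan is to reduce strict convexity of $J$ to a termwise analysis. Write $J(g) = \frac{1}{n}\sum_{i=1}^n h_i(g)$ with $h_i(g) = \rho(\|\Phi(\bX_i) - g\|_{\mathcal{H}})$. Each $h_i$ is convex as the composition of a convex nondecreasing $\rho$ with the convex map $g \mapsto \|\Phi(\bX_i) - g\|_{\mathcal{H}}$, so $J$ is at least convex. For strictness I fix $g_1 \neq g_2$ in $\mathcal{H}$, $\lambda \in (0,1)$, and $g_\lambda = \lambda g_1 + (1-\lambda) g_2$, and chain the two standard inequalities
\begin{equation*}
h_i(g_\lambda) \;\leq\; \rho\bigl(\lambda\|\Phi(\bX_i)-g_1\|_{\mathcal{H}} + (1-\lambda)\|\Phi(\bX_i)-g_2\|_{\mathcal{H}}\bigr) \;\leq\; \lambda h_i(g_1) + (1-\lambda) h_i(g_2),
\end{equation*}
where the first uses the triangle inequality together with $\rho$ nondecreasing, and the second uses convexity of $\rho$. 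It then suffices to exhibit, for each choice of $g_1 \neq g_2$, at least one index $i$ for which one of the two $\leq$ above is strict.

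For case (i), I first observe that a strictly convex, nondecreasing $\rho$ on $[0,\infty)$ is automatically strictly increasing: if $\rho(a)=\rho(b)$ for some $a<b$, monotonicity forces $\rho$ to be constant on $[a,b]$, which contradicts strict convexity on that interval. Now, for each $i$, if $\|\Phi(\bX_i)-g_1\|_{\mathcal{H}} \neq \|\Phi(\bX_i)-g_2\|_{\mathcal{H}}$, strict convexity of $\rho$ makes the second $\leq$ strict. Otherwise the two norms agree and, because the Hilbert-space norm is strictly convex (the unit sphere of $\mathcal{H}$ contains no line segment, a direct consequence of the parallelogram identity), $g_1 \neq g_2$ gives $\|\Phi(\bX_i)-g_\lambda\|_{\mathcal{H}} < \|\Phi(\bX_i)-g_1\|_{\mathcal{H}}$; strict monotonicity of $\rho$ then makes the first $\leq$ strict. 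Thus every $h_i$ is strictly convex, hence so is $J$.

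For case (ii), $\rho$ is merely convex, so the second $\leq$ may be an equality and I must force strictness of the first. Equality $\|u+v\|_{\mathcal{H}} = \|u\|_{\mathcal{H}}+\|v\|_{\mathcal{H}}$ in a Hilbert space holds iff $u,v$ are non-negative multiples of each other (the equality case of Cauchy--Schwarz), so the first $\leq$ is strict unless $\Phi(\bX_i)-g_1$ and $\Phi(\bX_i)-g_2$ lie on a common ray from the origin. A short algebraic check shows this last condition (excluding the impossible $g_1=g_2$) is equivalent to $\Phi(\bX_i)$ lying on the affine line through $g_1$ and $g_2$. Positive definiteness of $K$ is equivalent to $\Phi(\bX_1),\ldots,\Phi(\bX_n)$ being linearly independent in $\mathcal{H}$, and linear independence implies affine independence; with $n \geq 3$ affinely independent vectors, it is impossible for all $\Phi(\bX_i)$ to lie on a single affine line. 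So for any $g_1 \neq g_2$ there is at least one $i$ with $\Phi(\bX_i)$ off that line, and combined with $\rho$ strictly increasing this term contributes a strict inequality to the chain.

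The main obstacle is the geometric bookkeeping in case (ii): recognizing the correct equality case of the triangle inequality in $\mathcal{H}$, rephrasing ``common ray from the origin'' as ``$\Phi(\bX_i)$ on the affine line through $g_1,g_2$,'' and then invoking linear-hence-affine independence of the feature vectors together with $n\geq 3$ to rule this out simultaneously for every $i$. Case (i) is comparatively routine once one notices the implicit upgrade from nondecreasing to strictly increasing.
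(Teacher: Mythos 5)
Your proof is correct and follows the same two-inequality skeleton as the paper's (triangle inequality plus monotonicity of $\rho$, then convexity of $\rho$, with strictness extracted from at least one summand), but the two strictness steps are handled differently, and in one place more completely. For case (i) the paper simply asserts that strict convexity of $\rho$ makes the second inequality strict; that is only true for indices $i$ with $\|\Phi(\bX_i)-g\|_{\mathcal{H}} \neq \|\Phi(\bX_i)-h\|_{\mathcal{H}}$, and nothing in condition (i) excludes the degenerate situation where every $\Phi(\bX_i)$ is equidistant from $g$ and $h$ (the equidistant set is a hyperplane, so this can happen even for linearly independent feature vectors). Your sub-case analysis --- strict convexity of $\rho$ when the two norms differ, strict convexity of the Hilbert ball together with the derived strict monotonicity of $\rho$ when they agree --- closes that gap, so your case (i) is tighter than the paper's. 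For case (ii) both arguments reduce to showing that $\Phi(\bX_i)-g$ and $\Phi(\bX_i)-h$ cannot be (non-negatively) proportional for every $i$; the paper does this through a separate algebraic lemma (Lemma \ref{lem:lindep}) that manipulates the three dependence relations directly, whereas you observe that proportionality forces $\Phi(\bX_i)$ onto the affine line through $g$ and $h$, and that positive definiteness of $K$ gives linear, hence affine, independence, so at most two of the $n \geq 3$ feature points can be collinear with $g$ and $h$. This reaches the same conclusion more transparently. The one blemish is your claim that the common-ray condition is \emph{equivalent} to collinearity: for $\Phi(\bX_i) = t g_1 + (1-t)g_2$ with $t \in (0,1)$ the two difference vectors lie on opposite rays, so only the implication you actually use (common ray implies collinear) holds; this does not affect the validity of the argument.
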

The second condition implies that $J$ can be strictly convex even for the Huber loss, which is convex but not strictly convex.

\section{KIRWLS Algorithm and Its Convergence}\label{sec:kirwls}
In general, (\ref{eqn:rkde}) does not have a closed form solution and $\widehat{f}_{RKDE}$ has to be found by an iterative algorithm. Fortunately, the iteratively re-weighted least squares (IRWLS) algorithm used in classical $M$-estimation \citep{huber64} can be extended to a RKHS using the \emph{kernel trick}. The kernelized iteratively re-weighted least squares (KIRWLS) algorithm starts with initial $w_i^{(0)} \in \reals$ , $i = 1, \dots, n$ such that $w_i^{(0)} \geq 0$ and $\sum_{i=1}^n w_i^{(0)} = 1$, and generates a sequence $\{f^{(k)}\}$ by iterating on the following procedure:
\begin{gather*}
f^{(k)} = \sum_{i=1}^n w_i^{(k-1)}\Phi(\bX_i) ,\\
w_i^{(k)} = \frac{\varphi(\|\Phi(\bX_i) - f^{(k)}\|_\mathcal{H})}{\sum_{j=1}^n\varphi(\|\Phi(\bX_j) - f^{(k)}\|_\mathcal{H})}.
\end{gather*}
Intuitively, this procedure is seeking a fixed point of equations (\ref{eqn:representer}) and (\ref{eqn:w_i_proto}).
The computation of $\|\Phi(\bX_j) - f^{(k)}\|_\mathcal{H}$ can be done by observing
\begin{align*}
\|\Phi(\bX_j) - f^{(k)}\|_\mathcal{H}^2  & = \left\langle \Phi(\bX_j) -
f^{(k)},
\Phi(\bX_j) - f^{(k)} \right\rangle_\mathcal{H}\\
& = \big\langle\Phi(\bX_j), \Phi(\bX_j)\big\rangle_\mathcal{H} -2 \big\langle\Phi(\bX_j), f^{(k)}\big\rangle_\mathcal{H} + \big\langle f^{(k)}, f^{(k)}\big\rangle_\mathcal{H}.
\end{align*}
Since $f^{(k)} = \sum_{i=1}^n w_i^{(k-1)}\Phi(\bX_i)$, we have
\begin{eqnarray*}
\big\langle\Phi(\bX_j), \Phi(\bX_j)\big\rangle_\mathcal{H} & = & k_\sigma(\bX_j, \bX_j)\\
\big\langle\Phi(\bX_j), f^{(k)}\big\rangle_\mathcal{H} & = & \sum_{i=1}^n w_i^{(k-1)} k_\sigma(\bX_j,
\bX_i)\\
\big\langle f^{(k)}, f^{(k)}\big\rangle_\mathcal{H} & = & \sum_{i=1}^n \sum_{l=1}^n w_i^{(k-1)} w_l^{(k-1)}k_\sigma(\bX_i, \bX_l).
\end{eqnarray*}
Recalling that $\Phi(\bx) = k_\sigma(\cdot,\bx)$, after the $k$th
iteration
$$
f^{(k)}(\bx) = \sum_{i=1}^n w_i^{(k-1)} k_\sigma\left(\bx, \bX_i\right).
$$
Therefore, KIRWLS produces a sequence of weighted KDEs. The computational
complexity is $O(n^2)$ per iteration. In our experience, the number of
iterations needed is typically well below $100$. Initialization is discussed in the experimental study below.

KIRWLS can also be viewed as a kind of optimization transfer/majorize-minimize algorithm \citep{Lange00,jacobson07} with a quadratic surrogate for $\rho$. This perspective is used in our analysis in Section \ref{ssec:proof_thm_convergence_bm}, where $f^{(k)}$ is seen to be the solution of a weighted least squares problem.

The next theorem characterizes the convergence of KIRWLS in terms of $\{J(f^{(k)})\}_{k=1}^\infty$ and $\{f^{(k)}\}_{k=1}^\infty$.
\begin{theorem}\label{thm:convergence_bm}
Suppose assumptions (A1) - (A3) are satisfied, and $\varphi(x)$ is nonincreasing. Let
\begin{equation*}
\mathcal{S} = \bigl\{g \in \mathcal{H}\,  \bigl | V(g) = \bzero \bigr\}
\end{equation*}
and $\{f^{(k)}\}_{k=1}^\infty$ be the sequence produced by the KIRWLS algorithm. Then, $J(f^{(k)})$ monotonically decreases at every
iteration and converges. Also, $\mathcal{S} \neq \emptyset$ and
\begin{equation*}
\|f^{(k)} - \mathcal{S}\|_{\mathcal{H}} \triangleq \inf_{g\in \mathcal{S}} \|f^{(k)} - g\|_\mathcal{H} \to 0
\end{equation*}
as $k \to \infty$.
\end{theorem}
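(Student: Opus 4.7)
The plan is to prove this theorem by recognizing KIRWLS as a majorize--minimize (MM) algorithm, exactly as hinted in the remark preceding the statement. The pivotal observation is that, since $\varphi$ is nonincreasing, the function $u \mapsto \rho(\sqrt{u})$ on $[0,\infty)$ has derivative $\varphi(\sqrt{u})/2$ that is nonincreasing in $u$, so $\rho(\sqrt{\cdot})$ is concave. The tangent-line inequality then gives, for every $g, g' \in \mathcal{H}$ and every $i$,
\begin{equation*}
\rho(\|\Phi(\bX_i) - g\|_\mathcal{H}) \leq \rho(\|\Phi(\bX_i) - g'\|_\mathcal{H}) + \tfrac{1}{2}\varphi(\|\Phi(\bX_i) - g'\|_\mathcal{H})\bigl(\|\Phi(\bX_i)-g\|_\mathcal{H}^2 - \|\Phi(\bX_i)-g'\|_\mathcal{H}^2\bigr),
\end{equation*}
with equality at $g = g'$. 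Averaging produces a quadratic surrogate $Q(g;g')$ with $J(g) \leq Q(g;g')$ and $J(g') = Q(g';g')$. A direct computation shows that the unique minimizer of $Q(\,\cdot\,;f^{(k)})$ over $\mathcal{H}$ is $\sum_i \varphi_i^{(k)}\Phi(\bX_i)/\sum_l \varphi_l^{(k)}$, where $\varphi_i^{(k)} := \varphi(\|\Phi(\bX_i) - f^{(k)}\|_\mathcal{H})$; this is precisely the KIRWLS update $f^{(k+1)}$. Strong convexity of $Q(\,\cdot\,;f^{(k)})$ with modulus $\mu_k := \tfrac{1}{n}\sum_i \varphi_i^{(k)}$ then yields
\begin{equation*}
J(f^{(k)}) - J(f^{(k+1)}) \;\geq\; Q(f^{(k)}; f^{(k)}) - Q(f^{(k+1)}; f^{(k)}) \;\geq\; \tfrac{\mu_k}{2}\|f^{(k+1)} - f^{(k)}\|_\mathcal{H}^2 \;\geq\; 0,
\end{equation*}
which together with $J \geq 0$ gives monotone decrease and convergence of $\{J(f^{(k)})\}$, proving the first assertion.

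For the set-convergence claim, I would first note that every iterate is a convex combination of $\Phi(\bX_1),\ldots,\Phi(\bX_n)$ (nonnegative weights summing to one), so $\{f^{(k)}\}$ lies in a closed bounded subset of the finite-dimensional subspace $\mathrm{span}\{\Phi(\bX_i)\}$ and is therefore precompact in $\mathcal{H}$. Using the crude bound $\|\Phi(\bX_i) - f^{(k)}\|_\mathcal{H} \leq 2\tau$ and the monotonicity of $\varphi$, I would establish a uniform positive lower bound $\mu_k \geq \mu_* > 0$; combined with the telescoping descent inequality this forces $\|f^{(k+1)} - f^{(k)}\|_\mathcal{H} \to 0$. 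For any cluster point $f^* = \lim_{m} f^{(k_m)}$, the step-size result gives $f^{(k_m+1)} \to f^*$ as well, and continuity of $\varphi$ from (A3) lets one pass to the limit in the defining relation $\bigl(\sum_l \varphi_l^{(k_m)}\bigr) f^{(k_m+1)} = \sum_i \varphi_i^{(k_m)} \Phi(\bX_i)$ to obtain $V(f^*) = \bzero$ via the formula in Lemma 1. Hence every cluster point lies in $\mathcal{S}$, so $\mathcal{S} \neq \emptyset$. A standard contradiction argument upgrades this to $\|f^{(k)} - \mathcal{S}\|_\mathcal{H} \to 0$: otherwise one extracts a subsequence bounded away from $\mathcal{S}$ by some $\varepsilon > 0$, then a further convergent subsequence (by precompactness) whose limit must still be in $\mathcal{S}$, a contradiction.

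The main obstacle is establishing the uniform lower bound on $\mu_k$, which is what converts summability of $\mu_k \|f^{(k+1)} - f^{(k)}\|_\mathcal{H}^2$ into $\|f^{(k+1)} - f^{(k)}\|_\mathcal{H} \to 0$. For Huber's loss, $\varphi(2\tau) = a/(2\tau) > 0$ and the bound is immediate; for Hampel's loss, where $\varphi$ vanishes beyond $c$, the trivial inequality $\|\Phi(\bX_i) - f^{(k)}\|_\mathcal{H} \leq 2\tau$ is not enough, and one must exploit the finite-dimensional structure and the fact that $f^{(k)}$ itself lies in the convex hull of the finitely many $\Phi(\bX_i)$'s to argue that not all the $\varphi_i^{(k)}$ can collapse simultaneously. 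Everything else---the majorization, descent identity, cluster-point analysis, and final subsequence argument---is then routine bookkeeping once this nondegeneracy is secured.
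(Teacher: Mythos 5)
Your majorization step is exactly the paper's: the concavity of $u \mapsto \rho(\sqrt{u})$ (equivalent to $\varphi$ nonincreasing) yields the quadratic surrogate $u(x;r) = \rho(r) - \frac12 r\psi(r) + \frac12\varphi(r)x^2$, the KIRWLS update is the minimizer of $Q(\,\cdot\,;f^{(k)})$, and the sandwich $J(f^{(k)}) = Q(f^{(k)};f^{(k)}) \ge Q(f^{(k+1)};f^{(k)}) \ge J(f^{(k+1)})$ gives monotone descent and convergence of $\{J(f^{(k)})\}$. The compactness of the set of convex combinations of the $\Phi(\bX_i)$ and the final contradiction argument also match the paper. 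That part is all fine.

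The gap is in how you get from cluster points to $\mathcal{S}$. Your route goes through $\|f^{(k+1)} - f^{(k)}\|_\mathcal{H} \to 0$, which you derive from a strong-convexity modulus $\mu_k = \frac1n\sum_i \varphi(\|\Phi(\bX_i) - f^{(k)}\|_\mathcal{H})$ bounded below by some $\mu_* > 0$. As you yourself concede, that bound is not available under the stated hypotheses: for Hampel's loss $\varphi$ vanishes outside $[0,c)$, and (A1)--(A3) plus monotonicity of $\varphi$ give no reason why $\mu_k$ cannot approach $0$; moreover, showing that ``not all the $\varphi_i^{(k)}$ collapse simultaneously'' would only give $\mu_k>0$ for each $k$, not a uniform bound, so the telescoped descent inequality does not force the step sizes to vanish. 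Without that, the identity $\bigl(\sum_l \varphi_l^{(k_m)}\bigr) f^{(k_m+1)} = \sum_i \varphi_i^{(k_m)}\Phi(\bX_i)$ has a limit, but you cannot identify that limit with $f^*$. The paper avoids this entirely: on a convergent subsequence $f^{(k_l)} \to f^*$ it chains
\begin{equation*}
Q(f^{(k_{l+1})};f^{(k_{l+1})}) = J(f^{(k_{l+1})}) \le J(f^{(k_l+1)}) \le Q(f^{(k_l+1)};f^{(k_l)}) \le Q(g;f^{(k_l)}), \quad \forall g \in \mathcal{H},
\end{equation*}
using monotonicity of $J(f^{(k)})$ along the full sequence, and passes to the limit using only continuity of $Q$ in both arguments to conclude $Q(f^*;f^*) \le Q(g;f^*)$ for all $g$; hence $f^*$ minimizes its own surrogate and $V(f^*) = \bzero$, with no control of consecutive-iterate distances and no lower bound on $\mu_k$. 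You should replace your step-size argument with this inequality chain (or supply the missing nondegeneracy as an additional assumption); as written, the proof does not close for compactly supported $\psi$.
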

In words, as the number of iterations grows, $f^{(k)}$ becomes arbitrarily close to the set of stationary points of $J$,
points $g \in \mathcal{H}$ satisfying $\delta J(g; h) = 0 \quad \forall h \in \mathcal{H}$.

\begin{corollary}
Suppose that the assumptions in Theorem \ref{thm:convergence_bm} hold and $J$ is strictly convex. Then, $\{f^{(k)}\}_{k=1}^\infty$ converges to $\widehat{f}_{RKDE}$ in the $\mathcal{H}$-norm.
\end{corollary}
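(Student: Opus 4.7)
The plan is to show that strict convexity collapses the set $\mathcal{S} = \{g : V(g) = \bzero\}$ of Theorem \ref{thm:convergence_bm} to the singleton $\{\widehat{f}_{RKDE}\}$, after which the corollary falls out immediately from the theorem's conclusion $\|f^{(k)} - \mathcal{S}\|_{\mathcal{H}} \to 0$.

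First, I would reinterpret $\mathcal{S}$ as the set of stationary points of $J$. By Lemma \ref{lemma:v_mu}, the condition $V(g) = \bzero$ is equivalent to $\delta J(g;h) = -\langle V(g), h\rangle_{\mathcal{H}} = 0$ for every $h \in \mathcal{H}$, so $\mathcal{S}$ consists exactly of those $g$ at which the Gateaux differential vanishes in every direction.

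Next, I would show $\mathcal{S}$ equals the set of global minimizers of $J$. Strict convexity implies ordinary convexity, which on a Hilbert space yields the first-order inequality $J(g') \ge J(g) + \delta J(g; g'-g)$ for all $g,g' \in \mathcal{H}$ (applied with fixed $g$ and any $g'$, this follows from the standard convexity argument on the line $t \mapsto J(g + t(g'-g))$). Hence any $g \in \mathcal{S}$ is a global minimizer. Conversely, the necessary direction of Lemma \ref{lemma:v_mu} shows every global minimizer lies in $\mathcal{S}$. Theorem \ref{thm:convergence_bm} guarantees $\mathcal{S} \neq \emptyset$, so a minimizer of $J$ exists, and it is $\widehat{f}_{RKDE}$ by definition \eqref{eqn:rkde}. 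Strict convexity forbids two distinct minimizers (if $g_1 \neq g_2$ both achieved the minimum $m$, then $J\bigl(\tfrac12 g_1 + \tfrac12 g_2\bigr) < \tfrac12 J(g_1) + \tfrac12 J(g_2) = m$, a contradiction). Therefore $\mathcal{S} = \{\widehat{f}_{RKDE}\}$.

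Finally, Theorem \ref{thm:convergence_bm} gives $\inf_{g \in \mathcal{S}} \|f^{(k)} - g\|_{\mathcal{H}} \to 0$, and since $\mathcal{S}$ is a singleton this is exactly $\|f^{(k)} - \widehat{f}_{RKDE}\|_{\mathcal{H}} \to 0$, which is the claimed convergence. The argument is essentially bookkeeping on top of the two prior results; the only place where care is needed is verifying that a stationary point of a convex functional on a Hilbert space is automatically a global minimum, but this is immediate from the first-order characterization of convexity and so I do not expect a real obstacle.
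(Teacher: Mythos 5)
Your proposal is correct and follows essentially the same route as the paper, whose entire proof is the one-line observation that strict convexity of $J$ forces $|\mathcal{S}| = 1$; you have simply supplied the standard details (stationary points of a convex functional are global minimizers via the first-order inequality, and strict convexity rules out two distinct minimizers) that the paper leaves implicit. No gaps.
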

This follows because under strict convexity of $J$, $|\mathcal{S}| = 1$.

\section{Influence Function for Robust KDE}\label{sec:ic}
To quantify the robustness of the RKDE, we study the influence function. First, we recall the traditional influence function from
robust statistics. Let $T(F)$ be an estimator of a scalar parameter based on a distribution $F$. As a measure of robustness of $T$, the influence function was proposed by
\citet{hampel74}. The influence function (IF) for $T$ at $F$ is defined as
\begin{equation*}
IF(x'; T, F) = \lim_{s\to 0} \frac{T((1-s)F + s\delta_{x'}) - T(F)}{s},
\end{equation*}
where $\delta_{x'}$ represents a discrete distribution that assigns probability $1$ to the point $x'$. Basically, $IF(x'; T, F)$ represents how $T(F)$ changes when the distribution $F$ is contaminated with infinitesimal probability mass at $x'$. One robustness measure of $T$ is whether the corresponding IF is bounded or not.

For example, the maximum likelihood estimator for the unknown mean $\theta$ of Gaussian distribution is the sample mean $T(F)$,
\begin{equation}\label{eqn:T}
T(F) = E_F [X] = \int x \, dF(x).
\end{equation}
The influence function for $T(F)$ in (\ref{eqn:T}) is
\begin{align*}
IF(x'; T, F) & = \lim_{s\to 0} \frac{T((1-s)F + s\delta_{x'}) - T(F)}{s}\\
& = x' - E_F[X].
\end{align*}
Since $|IF(x'; T, F)|$ increases without bound as $x'$ goes to $\pm \infty$, the estimator is considered to be not robust.

Now, consider a similar concept for a function estimate. Since the estimate is a function, not a scalar, we should be able to express
the change of the function value at every $\bx$.
\begin{definition}[IF for function estimate]
Let $T(\bx; F)$ be a function estimate based on $F$, evaluated at $\bx$. We define the influence function for $T(\bx; F)$ as
\begin{equation*}
IF(\bx, \bx'; T, F) = \lim_{s\to 0} \frac{T(\bx; F_s) - T(\bx; F)}{s}
\end{equation*}
where $F_s = (1-s)F+ s\delta_{\bx'}$.
\end{definition}
$IF(\bx, \bx'; T, F)$ represents the change of the estimated function $T$ at $\bx$ when we add infinitesimal probability mass at $\bx'$ to $F$. For example, the standard KDE is
\begin{align*}
T(\bx; F) & = \widehat{f}_{KDE}(\bx; F) = \int k_\sigma(\bx, \by) dF(\by)\\
& = E_F [k_\sigma(\bx,\bX)]
\end{align*}
where $\bX \sim F$. In this case, the influence function is
\begin{align}\label{eqn:ICKDE}
\nonumber IF(\bx, \bx'; \widehat{f}_{KDE}, F) &= \lim_{s\to 0} \frac{\widehat{f}_{KDE}(\bx;F_s) - \widehat{f}_{KDE}(\bx; F)}{s}\\
\nonumber & =  \lim_{s\to 0} \frac{E_{F_s} [k_\sigma(\bx,\bX)] - E_F [k_\sigma(\bx,\bX)]}{s}\\
\nonumber & =  \lim_{s\to 0} \frac{-sE_{F} [k_\sigma(\bx,\bX)] + sE_{\delta_{\bx'}} [k_\sigma(\bx,\bX)]}{s}\\
\nonumber & =  -E_{F} [k_\sigma(\bx,\bX)] + E_{\delta_{\bx'}} [k_\sigma(\bx,\bX)]\\
& =  -E_{F} [k_\sigma(\bx,\bX)] + k_\sigma(\bx,\bx')
\end{align}
With the empirical distribution $F_n = \frac{1}{n} \sum_{i=1}^n \delta_{\bX_i}$,
\begin{equation}\label{eqn:emICKDE}
IF(\bx, \bx'; \widehat{f}_{KDE}, F_n) = -\frac{1}{n}\sum_{i=1}^n k_\sigma(\bx, \bX_i) + k_\sigma(\bx, \bx').
\end{equation}

To investigate the influence function of the RKDE, we generalize its definition to a general distribution $\mu$, writing $\widehat{f}_{RKDE}(\,\cdot\,; \mu) = f_\mu$ where
\begin{equation*}
f_\mu = \argmin_{g \in \mathcal{H}} \int \rho(\|\Phi(\bx) - g\|_\mathcal{H}) \, d\mu(\bx).
\end{equation*}
For the robust KDE, $T(\bx, F) = \widehat{f}_{RKDE}(\bx; F) = \langle \Phi(\bx), f_{F} \rangle_\mathcal{H}$, we have the following characterization of the
influence function. Let $q(x) = x\psi'(x) - \psi(x)$.
\begin{theorem}\label{thm:influence_true}
Suppose assumptions (A1)-(A5) are satisfied. In addition, assume that $f_{F_s} \to f_{F}$ as $s \to 0$. If $\dot{f}_{F} \triangleq \lim_{s \to 0}\frac{f_{F_s}-f_{F}}{s}$ exists, then
\begin{equation*}
IF(\bx, \bx'; \widehat{f}_{RKDE}, F) = \langle \Phi(\bx), \dot{f}_{F} \rangle_\mathcal{H}
\end{equation*}
where $\dot{f}_{F} \in \mathcal{H}$ satisfies
\begin{eqnarray}\label{eqn:icfinal}
\nonumber \lefteqn{\biggl(\int \varphi(\|\Phi(\bx) - f_{F}\|_{\mathcal{H}}) dF \biggr)\cdot \dot{f}_{F} }\\
\nonumber & + & \int \biggl(\frac{\bigl\langle \dot{f}_{F} , \Phi(\bx) - f_{F} \bigr\rangle_\mathcal{H}}{\|\Phi(\bx) - f_{F}\|_{\mathcal{H}}^3}
\cdot q(\|\Phi(\bx) - f_{F}\|_{\mathcal{H}})\cdot\bigl(\Phi(\bx) - f_{F}\bigr)\biggr) dF(\bx)\\
& = & (\Phi(\bx') - f_{F})\cdot \varphi(\|\Phi(\bx') - f_{F}\|_{\mathcal{H}}).
\end{eqnarray}
\end{theorem}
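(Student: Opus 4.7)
The plan is to start from the first-order optimality condition for $f_\mu$ under a general probability measure $\mu$ (the analogue of $V(g) = 0$ in Lemma \ref{lemma:v_mu}), apply it along the contamination path $\mu = F_s = (1-s)F + s\delta_{\bx'}$, and then implicitly differentiate in $s$ at $s = 0$ to obtain an equation for $\dot f_F$. The value-level claim $IF(\bx,\bx';\widehat f_{RKDE},F) = \langle \Phi(\bx),\dot f_F\rangle_{\mathcal H}$ will drop out almost immediately from the reproducing property; the work is in deriving the integral equation \eqref{eqn:icfinal}.

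For the first claim, I would use that $\widehat f_{RKDE}(\bx;\mu) = f_\mu(\bx) = \langle\Phi(\bx),f_\mu\rangle_{\mathcal H}$. Then
\begin{equation*}
\frac{f_{F_s}(\bx) - f_{F}(\bx)}{s} = \Bigl\langle \Phi(\bx),\, \tfrac{f_{F_s} - f_F}{s}\Bigr\rangle_{\mathcal H},
\end{equation*}
and continuity of the inner product in its second argument, combined with the assumed existence of $\dot f_F = \lim_{s \to 0}(f_{F_s}-f_F)/s$ in $\mathcal H$, yields the desired formula. For the second claim, I would extend Lemma \ref{lemma:v_mu} to arbitrary probability measures $\mu$: the Gateaux condition $\delta J_\mu(f_\mu;h) = 0$ becomes $\int \varphi(\|\Phi(\bx)-f_\mu\|_{\mathcal H})\bigl(\Phi(\bx)-f_\mu\bigr)\, d\mu(\bx) = 0$ in $\mathcal H$. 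Plugging in $\mu = F_s = (1-s)F + s\delta_{\bx'}$ splits this into
\begin{equation*}
(1-s)\!\int\! \varphi\bigl(\|\Phi(\bx)-f_{F_s}\|_{\mathcal H}\bigr)\bigl(\Phi(\bx)-f_{F_s}\bigr) dF(\bx) + s\,\varphi\bigl(\|\Phi(\bx')-f_{F_s}\|_{\mathcal H}\bigr)\bigl(\Phi(\bx')-f_{F_s}\bigr) = \bzero.
\end{equation*}
Differentiating in $s$ at $s = 0$, the $(1-s)$ prefactor contributes nothing because the integral vanishes at $s=0$ by optimality of $f_F$. The $s$-derivative inside the integral uses the chain rule with $\frac{d}{ds}(\Phi(\bx)-f_{F_s})\big|_{s=0} = -\dot f_F$ and
\begin{equation*}
\frac{d}{ds}\|\Phi(\bx)-f_{F_s}\|_{\mathcal H}\Big|_{s=0} = -\frac{\langle\dot f_F,\Phi(\bx)-f_F\rangle_{\mathcal H}}{\|\Phi(\bx)-f_F\|_{\mathcal H}},
\end{equation*}
together with $\varphi'(r) = (r\psi'(r)-\psi(r))/r^2 = q(r)/r^2$, which is precisely where the function $q$ enters. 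The second term of the split, pre-multiplied by $s$, contributes only its $s = 0$ value, namely $\varphi(\|\Phi(\bx')-f_F\|_{\mathcal H})(\Phi(\bx')-f_F)$. Collecting terms and moving what multiplies $\dot f_F$ to the left-hand side gives exactly \eqref{eqn:icfinal}.

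The main obstacle I anticipate is rigorously justifying the exchange of differentiation with the integral against $F$, since $\dot f_F$ is defined only as a limit in $\mathcal H$ and the derivative of $\varphi(\|\Phi(\bx)-f_{F_s}\|_{\mathcal H})$ is potentially singular where $\|\Phi(\bx)-f_F\|_{\mathcal H}$ is small. The assumed existence of $\dot f_F \in \mathcal H$ and the continuity $f_{F_s}\to f_F$ are crucial here; boundedness of $\psi$ and $\varphi$ and Lipschitz continuity of $\varphi$ from (A3)--(A5) will provide the integrable dominating function needed for a dominated-convergence argument, while (A2) lets one interpret $\varphi(0)$ as a finite limit so the quotient $q(r)/r^2 = \varphi'(r)$ is well-behaved near zero in a way compatible with the $1/\|\cdot\|_{\mathcal H}^3$ factor once combined with the vector $(\Phi(\bx)-f_F)$. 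Once the differentiation under the integral is justified, the rest is bookkeeping with the chain rule and the reproducing property.
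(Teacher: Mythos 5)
Your proposal is correct and follows essentially the same route as the paper: both derive the population stationarity condition $V_{F_s}(f_{F_s}) = \bzero$, split it as $(1-s)V_F(f_{F_s}) + sV_{\delta_{\bx'}}(f_{F_s})$, and implicitly differentiate at $s=0$, with the $q$-term arising from $\varphi'(r) = q(r)/r^2$ and the interchange of limit and integral justified by dominated convergence using the boundedness and Lipschitz continuity of $\varphi$ from (A4)--(A5). The paper merely phrases the differentiation step in the language of Gateaux differentials and the chain rule $\delta(V_F\circ U)(F;\delta_{\bx'}-F)=\delta V_F(f_F;\dot f_F)$, and additionally notes that $V_\mu(g)$ must be defined as a Bochner integral, but the computation is identical to yours.
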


Unfortunately, for Huber or Hampel's $\rho$, there is no closed form solution for $\dot{f}_{F}$ of (\ref{eqn:icfinal}). However,
if we work with $F_n$ instead of $F$, we can find $\dot{f}_{F_n}$ explicitly. Let
\begin{gather*}
\bones  = [1, \dots, 1]^T,\\
\bk'  = [k_\sigma(\bx', \bX_1), \dots, k_\sigma(\bx', \bX_n)]^T,
\end{gather*}
$I_n$ be the $n \times n$ identity matrix, $K\triangleq (k_\sigma(\bX_i, \bX_j))_{i=1, j=1}^n$ be the kernel matrix, $Q$ be a diagonal matrix with
$Q_{ii} = q(\|\Phi(\bX_i) - f_{F_n}\|_{\mathcal{H}})/\|\Phi(\bX_i) - f_{F_n}\|_{\mathcal{H}}^3$,
\begin{equation*}
\gamma = \sum_{i=1}^n \varphi(\|\Phi(\bX_i) - f_{F_n}\|_{\mathcal{H}}),
\end{equation*}
and
\begin{equation*}
\bw  = [w_1, \dots, w_n]^T,
\end{equation*}
where $\bw$ gives the RKDE weights as in (\ref{eqn:representer}).

\begin{theorem}\label{thm:influence_emp}
Suppose assumptions (A1)-(A5) are satisfied. In addition, assume that
\begin{itemize}
\item $f_{F_{n,s}} \to f_{F_n}$ as $s\to 0$ (satisfied when $J$ is strictly convex)
\item the extended kernel matrix $K'$ based on $\{\bX_i\}_{i=1}^n\bigcup\{\bx'\}$ is positive definite.
\end{itemize}
Then,
\begin{equation*}
IF(\bx, \bx'; \widehat{f}_{RKDE}, F_n) = \sum_{i=1}^n \alpha_i k_\sigma(\bx, \bX_i) + \alpha' k_\sigma(\bx, \bx')
\end{equation*}
where
\begin{equation*}
\alpha' = n\cdot\varphi(\|\Phi(\bx') - f_{F_n}\|_{\mathcal{H}})/\gamma
\end{equation*}
and $\balpha = [\alpha_1, \dots, \alpha_n]^T$ is the solution of the following system of linear equations:
\begin{eqnarray*}
\lefteqn{\biggl\{\gamma I_n + (I_n -  \bones\cdot \bw^T)^T Q (I_n - \bones \cdot\bw^T)K\biggr\} \balpha}\\
& = &{}- n \varphi(\|\Phi(\bx') - f_{F_n}\|_{\mathcal{H}}) \bw -\alpha'  (I_n - \bones\cdot\bw^T)^T Q\cdot(I_n - \bones \cdot\bw^T)  \cdot \bk'.
\end{eqnarray*}
\end{theorem}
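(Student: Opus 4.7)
The plan is to specialize Theorem~\ref{thm:influence_true} to $F=F_n$ and then exploit the finite-dimensional structure induced by the empirical distribution to convert the implicit equation (\ref{eqn:icfinal}) into the explicit linear system stated in the theorem. The first step is to argue that $\dot f_{F_n}$ lies in the finite-dimensional subspace $\mathcal{H}_0\triangleq\mathrm{span}\{\Phi(\bX_1),\ldots,\Phi(\bX_n),\Phi(\bx')\}$. Because $F_{n,s}=(1-s)F_n+s\delta_{\bx'}$ is supported on $\{\bX_1,\ldots,\bX_n,\bx'\}$, the orthogonal projection argument underlying Theorem~\ref{thm:representer} applies verbatim to the weighted objective $\int\rho(\|\Phi(\cdot)-g\|_{\mathcal{H}})\,dF_{n,s}$: projecting any candidate onto $\mathcal{H}_0$ cannot increase the objective, so $f_{F_{n,s}}\in\mathcal{H}_0$, and in the limit $\dot f_{F_n}\in\mathcal{H}_0$ as well. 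One therefore writes
$$\dot f_{F_n}=\sum_{i=1}^n\alpha_i\Phi(\bX_i)+\alpha'\Phi(\bx'),$$
with coefficients uniquely determined because $K'$ is positive definite by hypothesis. The claimed functional form of $IF(\bx,\bx';\widehat{f}_{RKDE},F_n)$ then follows immediately from $IF(\bx,\bx';\widehat{f}_{RKDE},F_n)=\langle\Phi(\bx),\dot f_{F_n}\rangle_{\mathcal{H}}$ and the reproducing property.

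The second step is to solve for $\balpha$ and $\alpha'$ by plugging $F=F_n$ into (\ref{eqn:icfinal}). Letting $d_i\triangleq\Phi(\bX_i)-f_{F_n}$, the two integrals collapse to averages over $i=1,\ldots,n$, and multiplying through by $n$ rewrites (\ref{eqn:icfinal}) as
$$\gamma\,\dot f_{F_n}+\sum_{i=1}^n Q_{ii}\,\bigl\langle\dot f_{F_n},d_i\bigr\rangle_{\mathcal{H}}\,d_i=n\,\varphi\bigl(\|\Phi(\bx')-f_{F_n}\|_{\mathcal{H}}\bigr)\bigl(\Phi(\bx')-f_{F_n}\bigr).$$
Since $f_{F_n}=\sum_j w_j\Phi(\bX_j)$ by Theorem~\ref{thm:representer}, the coefficients of $d_i$ in the basis $\{\Phi(\bX_j)\}_j$ are $\be_i-\bw$. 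A direct calculation then assembles the $n$ inner products $\langle\dot f_{F_n},d_i\rangle_{\mathcal{H}}$ into the vector $(I_n-\bones\bw^T)(K\balpha+\alpha'\bk')$, and the weighted sum $\sum_i Q_{ii}\langle\dot f_{F_n},d_i\rangle d_i$ has coefficient vector $(I_n-\bones\bw^T)^T Q(I_n-\bones\bw^T)(K\balpha+\alpha'\bk')$ with respect to $\{\Phi(\bX_j)\}$ and zero coefficient with respect to $\Phi(\bx')$.

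The final step is to match coefficients on both sides in the extended basis (again justified by positive definiteness of $K'$). Equating the $\Phi(\bx')$ components gives $\gamma\alpha'=n\,\varphi(\|\Phi(\bx')-f_{F_n}\|_{\mathcal{H}})$, hence the stated formula for $\alpha'$. Equating the $\Phi(\bX_j)$ components produces exactly the linear system in the theorem statement, after transposing the $\alpha'\bk'$ term to the right-hand side. The principal obstacle is purely mechanical bookkeeping: the centering factors $(I_n-\bones\bw^T)$ and its transpose arise naturally because $f_{F_n}$ appears both inside each $d_i$ and through its coupling with $\dot f_{F_n}$, and one must carefully track which occurrence contributes which factor so that the assembled system matches the stated form. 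The only analytic ingredient beyond Theorem~\ref{thm:influence_true} is the representer-style argument justifying $f_{F_{n,s}}\in\mathcal{H}_0$, together with the positive-definiteness assumption on $K'$ that makes the coefficient decomposition unique.
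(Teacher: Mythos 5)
Your proposal is correct and follows essentially the same route as the paper: specialize Theorem~\ref{thm:influence_true} to $F_n$, write $\dot f_{F_n}=\sum_i\alpha_i\Phi(\bX_i)+\alpha'\Phi(\bx')$, and match coefficients in the extended basis (linearly independent by positive definiteness of $K'$ via Lemma~\ref{lem:lindep1}) to obtain $\gamma\alpha'=n\varphi(r')$ and the stated linear system, with your matrix bookkeeping agreeing entry-for-entry with the paper's. The only differences are cosmetic or minor: you justify $\dot f_{F_n}\in\mathrm{span}\{\Phi(\bX_i),\Phi(\bx')\}$ by applying the representer theorem to $f_{F_{n,s}}$ and passing to the limit in a closed finite-dimensional subspace, whereas the paper reads the form off the equation itself, and you omit the paper's final argument (uniform convergence of $J_{F_{n,s}}$ to $J$ on the compact set $\mathcal{D}_{n+1}$ plus the extreme value theorem) establishing the parenthetical claim that strict convexity of $J$ implies the assumed convergence $f_{F_{n,s}}\to f_{F_n}$.
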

Note that $\alpha'$ captures the amount by which the density estimator
changes near $\bx'$ in response to contamination at $\bx'$. Now $\alpha'$
is given by
\begin{equation*}
\alpha' = \frac{\varphi(\|\Phi(\bx') - f_{F_n}\|_{\mathcal{H}})}{\frac{1}{n}\sum_{i=1}^n \varphi(\|\Phi(\bX_i) - f_{F_n}\|_{\mathcal{H}})}.
\end{equation*}
For a standard KDE, we have $\varphi \equiv 1$ and $\alpha' = 1$, in agreement with (\ref{eqn:emICKDE}). For robust $\rho$, $\varphi(\|\Phi(\bx') - f_{F_n}\|_{\mathcal{H}})$ can be viewed as a measure of ``inlyingness'', with more inlying points having larger values. This follows from the discussion just after Theorem \ref{thm:representer}. If the contaminating point $\bx'$ is less inlying than the average $\bX_i$, then $\alpha' < 1$. Thus, the RKDE is less sensitive to outlying points than the KDE.

As mentioned above, in classical robust statistics, the robustness of an estimator can be inferred from the boundedness of the corresponding
influence function. However, the influence functions for density estimators are bounded even if $\|\bx'\| \to \infty$. Therefore, when we
compare the robustness of density estimates, we compare how close the influence functions are to the zero function.

\begin{figure}[!tb]
\centering
\begin{minipage}[htb]{0.49\linewidth}
    \centering
    \includegraphics[width=1.0\linewidth]{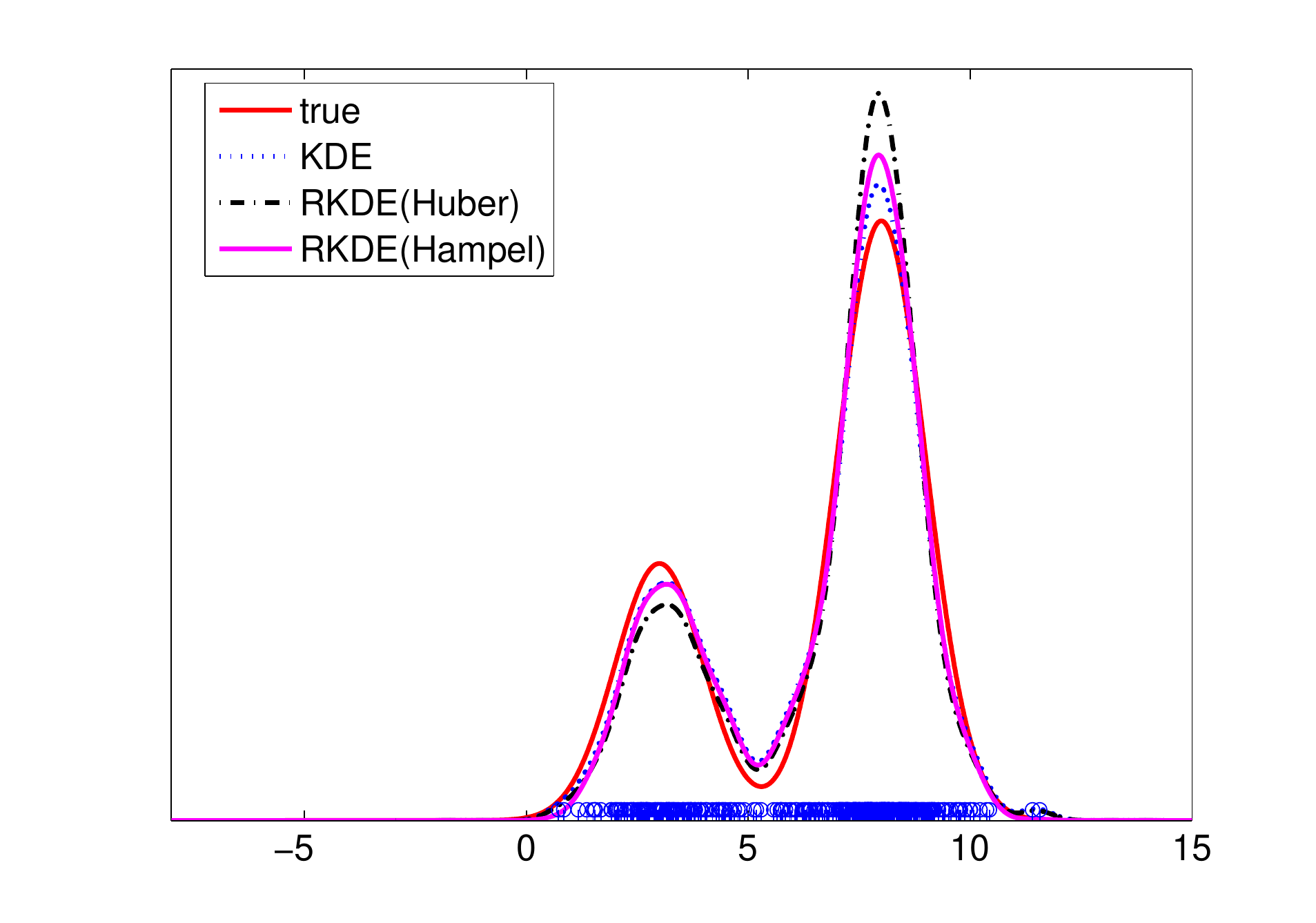}
    \centerline{\small (a)}
\end{minipage}
\hfill
\begin{minipage}[htb]{0.49\linewidth}
    \centering
    \includegraphics[width=1.0\linewidth]{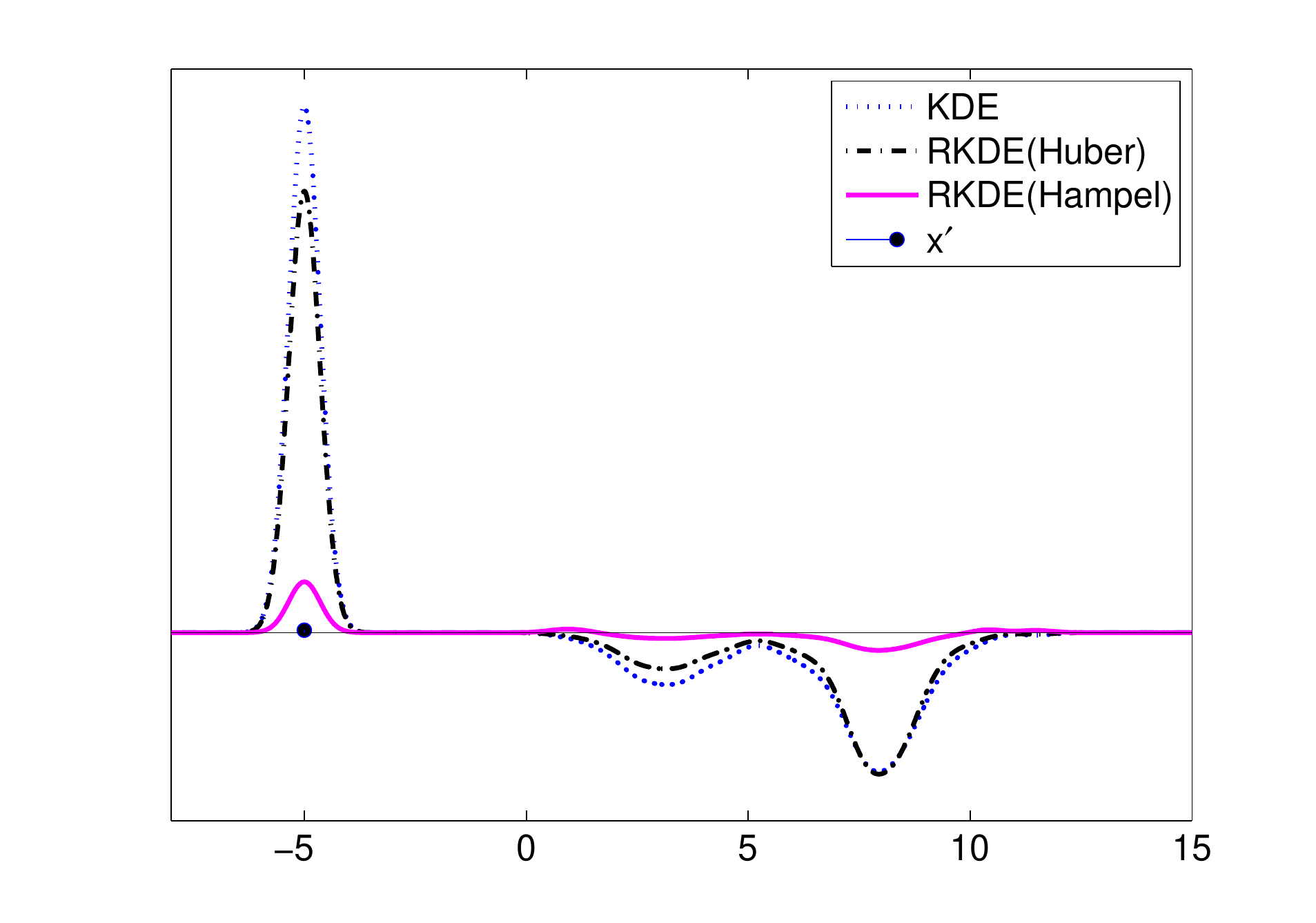}
    \centerline{\small (b)}
\end{minipage}
\caption{(a) true density and density estimates. (b) IF as a function of $\bx$ when $\bx' = -5$}\label{fig:IC}
\end{figure}

Simulation results are shown in Figure \ref{fig:IC} for a synthetic univariate distribution. Figure \ref{fig:IC} (a) shows the density of the distribution, and three estimates. Figure \ref{fig:IC} (b) shows the corresponding influence functions. As we can see in (b), for a point $\bx'$ in the tails of $F$, the influence functions for the robust KDEs are overall smaller, in absolute value, than those of the standard KDE (especially with Hampel's loss). Additional numerical results are given in Section \ref{ssec:exp_results}.

Finally, it is interesting to note that for any density estimator $\widehat{f}$,
\begin{equation*}
\int IF(\bx, \bx'; \widehat{f}, F)\, d\bx = \lim_{s\to 0} \frac{\int \widehat{f}(\bx; F_s)\, d\bx- \int \widehat{f}(\bx; F)\, d\bx}{s} = 0.
\end{equation*}
Thus $\alpha' = -\sum_{i=1}^n \alpha_i$ for a robust KDE. This suggests that since $\widehat{f}_{RKDE}$ has a smaller increase at $\bx'$ (compared to the KDE), it will also have a smaller decrease (in absolute value) near the training data. Therefore, the norm of $IF(\bx, \bx'; \widehat{f}_{RKDE}, F_n)$ should be smaller overall when $\bx'$ is an outlier. We confirm this in our experiments below.

\section{Experiments}\label{sec:experiment}
The experimental setup is described in \ref{ssec:exp_setup}, and results are presented in \ref{ssec:exp_results}.
\subsection{Experimental Setup}\label{ssec:exp_setup}
Data, methods, and evaluation are now discussed.
\subsubsection{Data}
We conduct experiments on $15$ benchmark data sets (Banana, B. Cancer, Diabetes, F. Solar, German, Heart, Image, Ringnorm, Splice,
Thyroid, Twonorm, Waveform, Pima Indian, Iris, MNIST), which were originally used in the task of classification. The data sets are available
online: see http://www.fml.tuebingen.mpg.de/Members/ for the first $12$ data sets and the UCI machine learning repository for the last $3$ data sets.
There are 100 randomly permuted partitions of each data set into ``training'' and ``test'' sets (20 for Image, Splice, and MNIST).

Given $\bX_1, \dots, \bX_n \sim f = (1-p)\cdot f_0 + p \cdot f_1$, our goal is to estimate $f_0$, or the
level sets of $f_0$. For each data set with two classes, we take one class as the nominal data from $f_0$ and the other class as contamination from $f_1$. For Iris, there are $3$ classes and we take one class as nominal data and the other two as contamination. For MNIST, we choose to use digit $0$ as nominal and digit $1$ as contamination. For MNIST, the original dimension $784$ is reduced to $8$ via kernel PCA using a Gaussian kernel with bandwidth $30$. For each data set, the training sample consists of $n_0$ nominal data and $n_1$ contaminating points, where $n_1 =
\epsilon\cdot n_0$ for $\epsilon = 0$, $0.05$, $0.10$, $0.15$, $0.20$, $0.25$ and $0.30$. Note that each $\epsilon$ corresponds to an anomaly
proportion $p$ such that $p = \frac{\epsilon}{1+\epsilon}$. $n_0$ is always taken to be the full amount of training data for the nominal class.

\subsubsection{Methods}
In our experiments, we compare three density estimators: the standard kernel density estimator (KDE), variable kernel density estimator (VKDE), and robust kernel density estimator (RKDE) with Hampel's loss. For all methods, the Gaussian kernel in (\ref{eqn:gaussian}) is used as the kernel function $k_\sigma$ and the kernel bandwidth $\sigma$ is set as the median distance of a training point $\bX_i$ to its nearest neighbor.

The VKDE has a variable bandwidth for each data point,
\begin{equation*}
\widehat{f}_{VKDE}(\bx) = \frac{1}{n} \sum_{i=1}^n k_{\sigma_i}(\bx, \bX_i),
\end{equation*}
and the bandwidth $\sigma_i$ is set as
\begin{equation*}
\sigma_i = \sigma \cdot \biggl(\frac{\eta}{\widehat{f}_{KDE}(\bX_i)}\biggr)^{1/2}
\end{equation*}
where $\eta$ is the mean of $\{\widehat{f}_{KDE}(\bX_i)\}_{i=1}^n$ \citep{abramson82, meer01}. There is another implementation of the VKDE where $\sigma_i$ is based on the distance to its $k$-th nearest neighbor \citep{breiman77}. However, this version did not perform as well and is therefore omitted.

For the RKDE, the parameters $a$, $b$, and $c$ in (\ref{eqn:hampel}) are set as follows. First, we compute $\widehat{f}_{med}$, the RKDE based on $\rho = |\,\cdot\,|$, and set $d_i = \|\Phi\left(\bX_i\right) - \widehat{f}_{med}\|_\mathcal{H}$. Then, $a$ is set to be the median of $\{d_i\}$, $b$ the $75$th percentile of $\{d_i\}$, and $c$ the $85$th percentile of $\{d_i\}$. After finding these parameters, we initialize $w_i^{(0)}$ such that $f^{(1)} = \widehat{f}_{med}$ and terminate KIRWLS when
\begin{equation*}
\frac{|J(f^{(k+1)}) - J(f^{(k)})|}{J(f^{(k)})} < 10^{-8}.
\end{equation*}

\subsubsection{Evaluation}
We evaluate the performance of the three density estimators in three different settings. First, we use the influence function to study sensitivity to outliers. Second and third, we compare the methods at the tasks of density estimation and anomaly detection, respectively. In each case, an appropriate performance measure is adopted. These are explained in detail in Section \ref{ssec:exp_results}. To compare a pair of methods across multiple data sets, we adopt the Wilcoxon signed-rank test \citep{wilcoxon45}. Given a performance measure, and given a pair of methods and $\epsilon$, we compute the difference $h_i$ between the performance of two density estimators on the $i$th data set. The data sets are ranked 1 through 15 according to their absolute values $|h_i|$, with the largest $|h_i|$ corresponding to the rank of 15. Let $R_1$ be the sum of ranks over these data sets where method 1 beats method 2, and let $R_2$ be the sum of the ranks for the other data sets. The signed-rank test statistic $T\triangleq \min(R_1, R_2)$ and the corresponding $p$-value are used to test whether the performances of the two methods are significantly different. For example, the critical value of $T$ for the signed rank test is $25$ at a significance level of $0.05$. Thus, if $T \leq 25$, the two methods are significantly different at the given significance level, and the larger of $R_1$ and $R_2$ determines the method with better performance.

\subsection{Experimental Results}\label{ssec:exp_results}
We begin by studying influence functions.
\subsubsection{Sensitivity using influence function}\label{sssec:exp_if}
As the first measure of robustness, we compare the influence functions for KDEs and RKDEs, given in (\ref{eqn:emICKDE}) and Theorem \ref{thm:influence_emp}, respectively. To our knowledge, there is no formula for the influence function of VKDEs, and therefore VKDEs are excluded in the comparison. We examine
$\alpha(\bx') = IF(\bx', \bx'; T, F_n)$ and
\begin{align*}
\beta(\bx') = \biggl(\int \bigl(IF(\bx, \bx'; T, F_n)\bigr)^2 d\bx\biggr)^{1/2}.
\end{align*}
In words, $\alpha(\bx')$ reflects the change of the density estimate value at an added point $\bx'$ and $\beta(\bx')$ is an overall impact of $\bx'$ on the density estimate over $\reals^d$.

In this experiment, $\epsilon$ is equal to 0, i.e, the density estimators are learned from a pure nominal sample. Then, we take contaminating points from the test sample, each of which serves as an $\bx'$. This gives us multiple $\alpha(\bx')$'s and $\beta(\bx')$'s. The performance measures are the medians of $\{\alpha(\bx')\}$ and $\{\beta(\bx')\}$ (smaller means better performance). The results using signed rank statistics are shown in Table \ref{tab:rank_if}. The results clearly states that for all data sets, RKDEs are less affected by outliers than KDEs.

\begin{table}[t]
\begin{center}
\begin{tabular}[c c c c c c] {c| c| c| c | c}
   \hline
   \hline  method 1   & method 2 &  & $\alpha(\bx')$ & $\beta(\bx')$\\
   \hline  \multirow{4}{*}{RKDE} & \multirow{4}{*}{KDE}
            & $R_1$      & 120   &  120  \\
            & & $R_2$    &   0   &   0 \\
            & & $T$      &  0    &    0  \\
            & & $p$-value& 0.00  &  0.00\\
   \hline
\end{tabular}
\end{center}
\caption{The signed-rank statistics and $p$-values of the Wilcoxon signed-rank test using the medians of $\{\alpha(\bx')\}$ and $\{\beta(\bx')\}$ as a performance measure. If $R_1$ is larger than $R_2$, method 1 is better than method 2.} \label{tab:rank_if}
\end{table}

\subsubsection{Kullback-Leibler (KL) divergence}
Second, we present the Kullback-Leibler (KL) divergence between density estimates $\widehat{f}$ and $f_0$,
\begin{equation*}
D_{KL}(\widehat{f} \,||\, f_0) = \int \widehat{f}(\bx) \log{\frac{\widehat{f}(\bx)}{f_0(\bx)}} d\bx.
\end{equation*}
This KL divergence is large whenever $\widehat{f}$ estimates $f_0$ to have mass where it does not.

The computation of $D_{KL}$ is done as follows. Since we do not know the nominal $f_0$, it is estimated as $\widetilde{f}_0$, a KDE based
on a separate nominal sample, obtained from the test data for each benchmark data set. Then, the integral is approximated by the sample mean, i.e.,
\begin{equation*}
D_{KL}(\widehat{f} \,||\, f_0) \approx \sum_{i=1}^{n'} \log{\frac{\widehat{f}(\bx'_i)}{\widetilde{f}_0(\bx'_i)}}
\end{equation*}
where $\{\bx'_i\}_{i=1}^{n'}$ is an i.i.d sample from the estimated density $\widehat{f}$ with $n' = 2 n = 2(n_0+n_1)$. Note that the estimated KL divergence can have an infinite value when $\widetilde{f}_0(\by) = 0$ (to machine precision) and $\widehat{f}(\by) > 0$ for some $\by \in \reals^d$. The averaged KL divergence over the permutations are used as the performance measure (smaller means better performance). Table \ref{tab:rank_kld} summarizes the results.

When comparing RKDEs and KDEs, the results show that KDEs have smaller KL divergence than RKDEs with $\epsilon =0$. As $\epsilon$ increases, however, RKDEs estimate $f_0$ more accurately than KDEs. The results also demonstrate that VKDEs are the worst in the sense of KL divergence. Note that VKDEs place a total mass of $1/n$ at all $\bX_i$, whereas the RKDE will place a mass $w_i < 1/n$ at outlying points.

\begin{table}[t]
\begin{center}
\begin{tabular}[c c c c c c c c c c c] {c| c| c| c c c c c c c c}
   \hline
   \hline  \multirow{2}{*}{method 1}   & \multirow{2}{*}{method 2} &  & \multicolumn{7}{c}{$\epsilon$}\\
   \cline{4-10}
              & &          & 0.00 & 0.05 & 0.10 & 0.15 & 0.20 & 0.25 & 0.30\\
   \hline  \multirow{4}{*}{RKDE} & \multirow{4}{*}{KDE}
            & $R_1$    & 26    &   67  &   78  &   83  &   94  &  101  &  103 \\
            & &$R_2$   & 94    &   53  &   42  &   37  &   26  &   19  &   17 \\
            & & $T$      & 26    &   53  &   42  &   37  &   26  &   19  &   17 \\
            & & $p$-value  & 0.06  &  0.72  &  0.33 & 0.21 & 0.06 & 0.02 & 0.01\\
   \hline  \multirow{4}{*}{RKDE} & \multirow{4}{*}{VKDE}
            & $R_1$    & 104   &  117  &  117  &  117  &  117  &  119  &  119 \\
            & & $R_2$   & 16    &   3   &    3  &    3  &    3  &    1  &    1 \\
            & & $T$      & 16    &   3   &    3  &    3  &    3  &    1  &    1 \\
            & & $p$-value  & 0.01  & 0.00  &  0.00 & 0.00  & 0.00  & 0.00 & 0.00\\
   \hline  \multirow{4}{*}{VKDE} & \multirow{4}{*}{KDE}
            & $R_1$    &  0    &    0  &    0  &    0  &    0  &    0  &    0 \\
            & & $R_2$  &  120  &  120  &  120  &  120  &  120  &  120  &  120 \\
            & & $T$      &  0    &    0  &    0  &    0  &    0  &    0  &    0 \\
            & & $p$-value  & 0.00  & 0.00  & 0.00  & 0.00  & 0.00 & 0.00 & 0.00\\
   \hline
\end{tabular}
\end{center}
\caption{The signed-rank statistics and $p$-values of the Wilcoxon signed-rank test using KL divergence as a performance measure. If $R_1$ is larger than $R_2$, method 1 is better than method 2.} \label{tab:rank_kld}
\end{table}

\subsubsection{Anomaly detection}
In this experiment, we apply the density estimators in anomaly detection problems. If we had a pure sample from $f_0$, we would estimate $f_0$ and use $\{\bx : \widehat{f}_0(\bx) > \lambda\}$ as a detector. For each $\lambda$, we could get a false negative and false positive probability using test data. By varying $\lambda$, we would then obtain a receiver operating characteristic (ROC) and area under the curve (AUC). However, since we have a contaminated sample, we have to estimate $f_0$ robustly. Robustness can be checked by comparing the AUC of the anomaly detectors, where the density estimates are based on the contaminated training data (higher AUC means better performance).

Examples of the ROCs are shown in Figure \ref{fig:ROC}. The RKDE provides better detection probabilities, especially at low false alarm rates. This results in higher AUC. For each pair of methods and each $\epsilon$, $R_1$, $R_2$, $T$ and $p$-values are shown in Table \ref{tab:rank_auc}. The results indicate that RKDEs are significantly better than KDEs when $\epsilon \geq 0.20$ with significance level $0.05$. RKDEs are also better than VKDEs when $\epsilon \geq 0.15$ but the difference is not significant. We also note that we have also evaluated the kernelized spatial depth (KSD) \citep{yixin09} in this setting. While this method does not yield a density estimate, it does aim to estimate density contours robustly. We found that the KSD performs worse in terms of AUC that either the RKDE or KDE, so those results are omitted \citep{kim11}.

\begin{figure}[!tb]
\centering
\begin{minipage}[htb]{.45\linewidth}
  \centering
  \includegraphics[width=1.0\linewidth]{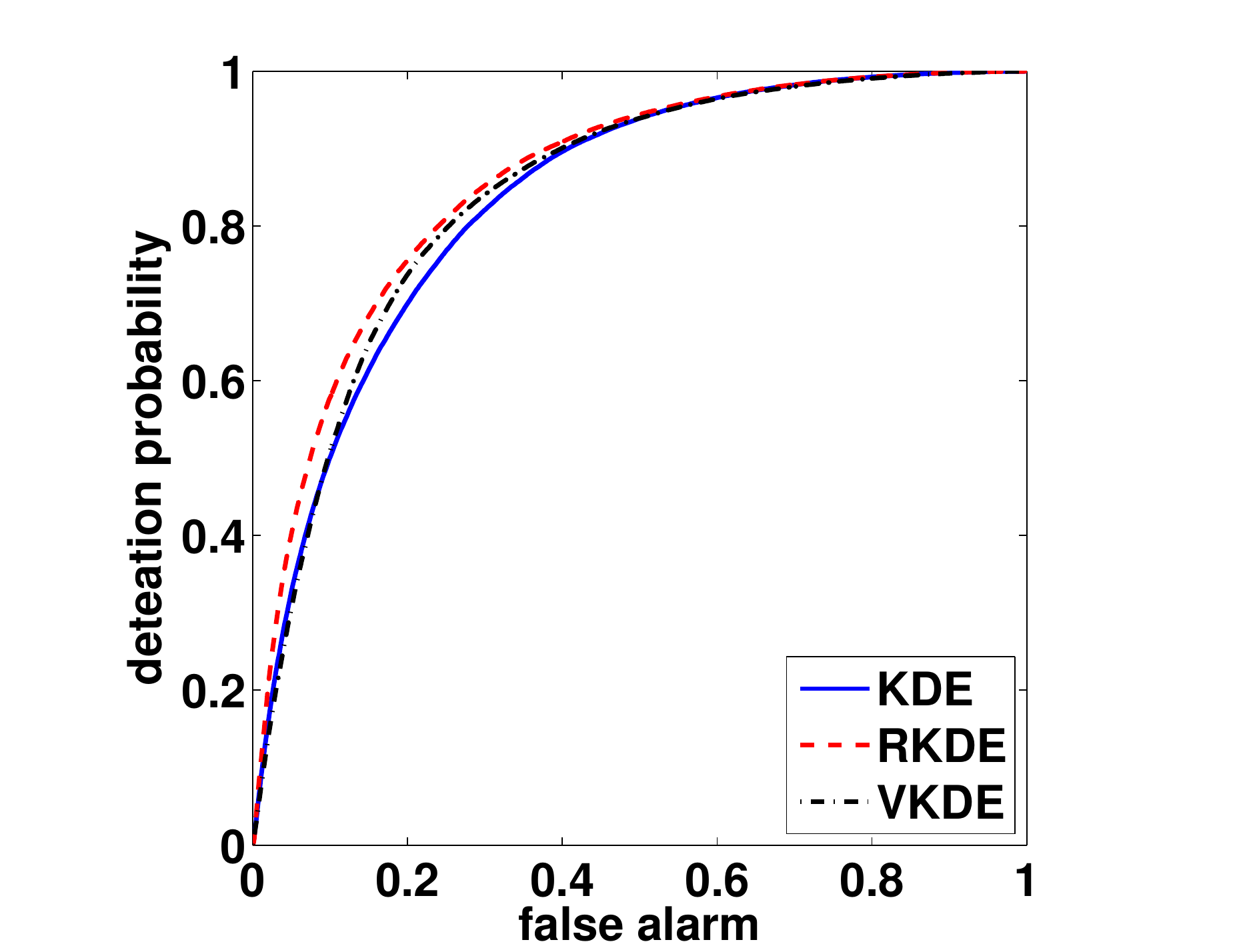}
  \centerline{(a) Banana, $\epsilon= 0.2$}\medskip
\end{minipage}
\hfill
\begin{minipage}[htb]{.45\linewidth}
  \centering
  \includegraphics[width=1.0\linewidth]{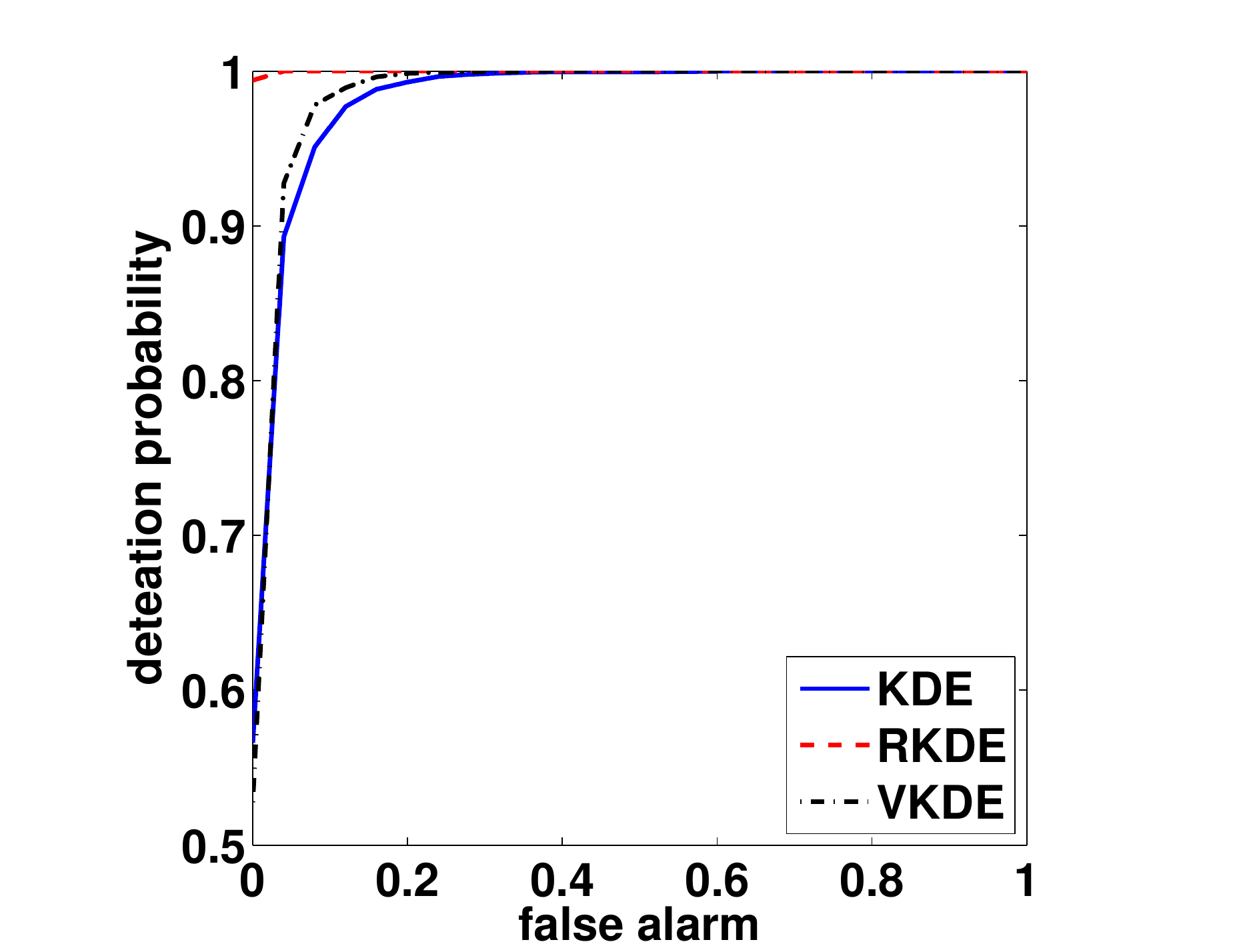}
  \centerline{(b) Iris, $\epsilon = 0.1$}\medskip
\end{minipage}
\caption{Examples of ROCs.} \label{fig:ROC}
\end{figure}

\begin{table}[t]
\begin{center}
\begin{tabular}[c c c c c c c c c c c] {c| c| c| c c c c c c c c}
   \hline
   \hline  \multirow{2}{*}{method 1}   & \multirow{2}{*}{method 2} &  & \multicolumn{7}{c}{$\epsilon$}\\
   \cline{4-10}
              & &          & 0.00 & 0.05 & 0.10 & 0.15 & 0.20 & 0.25 & 0.30\\
   \hline  \multirow{4}{*}{RKDE} & \multirow{4}{*}{KDE}
            & $R_1$      & 26    &   46  &   67  &   90  &   95  &   96  &   99 \\
            & & $R_2$    & 94    &   74  &   53  &   30  &   25  &   24  &   21 \\
            & & $T$      & 26    &   46  &   53  &   30  &   25  &   24  &   21 \\
            & & $p$-value  & 0.06  &  0.45  &  0.72 & 0.09 & 0.05 & 0.04 & 0.03\\
   \hline  \multirow{4}{*}{RKDE} & \multirow{4}{*}{VKDE}
            & $R_1$      & 33    &   49  &   58  &   75  &   80  &   90  &   86 \\
            & & $R_2$    & 87    &   71  &   62  &   45  &   40  &   30  &   34 \\
            & & $T$      & 33    &   49  &   58  &   45  &   40  &   30  &   34 \\
            & & $p$-value  & 0.14  & 0.56  &  0.93 & 0.42  & 0.28  & 0.09 & 0.15\\
   \hline  \multirow{4}{*}{VKDE} & \multirow{4}{*}{KDE}
            & $R_1$      & 38    &   70  &   79  &   91  &   95  &   96  &   99 \\
            & & $R_2$    & 82    &   50  &   41  &   29  &   25  &   24  &   21 \\
            & & $T$      & 38    &   50  &   41  &   29  &   25  &   24  &   21 \\
            & & $p$-value  & 0.23  & 0.60  & 0.30  & 0.08  & 0.05 & 0.04 & 0.03\\
   \hline
\end{tabular}
\end{center}
\caption{The signed-rank statistics of the Wilcoxon signed-rank test using AUC as a performance measure. If $R_1$ is larger than $R_2$, method 1 is better than method 2.} \label{tab:rank_auc}
\end{table}

\section{Conclusions}\label{sec:conclusion}
When kernel density estimators employ a smoothing kernel that is also a PSD kernel, they may be viewed as $M$-estimators in the RKHS associated with the kernel. While the traditional KDE corresponds to the quadratic loss, the RKDE employs a robust loss to achieve robustness to contamination of the training sample. The RKDE is a weighted kernel density estimate, where smaller weights are given to more outlying data points. These weights can be computed efficiently using a kernelized iteratively re-weighted least squares algorithm. The decreased sensitivity of RKDEs to contamination is further attested by the influence function, as well as experiments on anomaly detection and density estimation problems.

Robust kernel density estimators are nonparametric, making no parametric assumptions on the data generating distributions. However, their success is still contingent on certain conditions being satisfied. Obviously, the percentage of contaminating data must be less than $50\%$; our experiments examine contamination up to around $25\%$. In addition, the contaminating distribution must be outlying with respect to the nominal distribution.  Furthermore, the anomalous component should not be too concentrated, otherwise it may look like a mode of the nominal component. Such assumptions seem necessary given the unsupervised nature of the problem, and are implicit in our interpretation of the representer theorem and influence functions.

Although our focus has been on density estimation, in many applications the ultimate goal is not to estimate a density, but rather to estimate
decision regions. Our methodology is immediately applicable to such situations, as evidenced by our experiments on anomaly detection. It is
only necessary that the kernel be PSD here; the assumption that the kernel be nonnegative and integrate to one can clearly be dropped. This allows
for the use of more general kernels, such as polynomial kernels, or kernels on non-Euclidean domains such as strings and trees. The learning
problem here could be described as one-class classification with contaminated data.

In future work it would be interesting to investigate asymptotics, the bias-variance trade-off, and the efficiency-robustness trade-off of robust
kernel density estimators, as well as the impact of different losses and kernels.

\section{Proofs}\label{sec:proofs}
We begin with three lemmas and proofs. The first lemma will be used in the proofs of Lemma \ref{lem:lindep} and Theorem \ref{thm:influence_emp}, the
second one in the proof of Lemma \ref{lemma:strict_convex}, and the third one in the proof of Theorem \ref{thm:convergence_bm}.

\begin{lemma}\label{lem:lindep1}
Let $\bz_1, \dots, \bz_m$ be distinct points in $\reals^d$. If $K = (k(\bz_i, \bz_j))_{i, j= 1}^n$ is positive definite, then $\Phi(\bz_i) = k(\,\cdot\,, \bz_i)$'s are linearly independent.
\end{lemma}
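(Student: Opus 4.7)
The statement reduces to a direct use of the kernel identity $\langle \Phi(\bz_i),\Phi(\bz_j)\rangle_{\mathcal{H}} = k(\bz_i,\bz_j)$. The plan is to take an arbitrary linear dependence relation and compute the squared norm of the zero combination via the kernel.

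More precisely, I would suppose $\sum_{i=1}^m c_i \Phi(\bz_i) = \bzero$ in $\mathcal{H}$ for some scalars $c_1, \dots, c_m$, and let $\mathbf{c} = [c_1, \dots, c_m]^T$. Taking the squared $\mathcal{H}$-norm of both sides and expanding bilinearly gives
\begin{equation*}
0 = \Bigl\| \sum_{i=1}^m c_i \Phi(\bz_i)\Bigr\|_{\mathcal{H}}^2 = \sum_{i=1}^m \sum_{j=1}^m c_i c_j \bigl\langle \Phi(\bz_i), \Phi(\bz_j) \bigr\rangle_{\mathcal{H}} = \sum_{i=1}^m \sum_{j=1}^m c_i c_j\, k(\bz_i,\bz_j) = \mathbf{c}^T K \mathbf{c}.
\end{equation*}

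Since $K$ is positive definite by hypothesis, the quadratic form $\mathbf{c}^T K \mathbf{c} = 0$ forces $\mathbf{c} = \bzero$, so every $c_i = 0$. Hence the $\Phi(\bz_i)$ are linearly independent in $\mathcal{H}$.

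There is no real obstacle here; the only subtlety worth flagging is that one must use the kernel-inner-product identity, which itself comes from the reproducing property applied to the canonical feature map $\Phi(\bx) = k(\,\cdot\,,\bx)$. Note that distinctness of the $\bz_i$ is not actually invoked directly in the argument, but it is implicit in the hypothesis that $K$ be positive definite (for a radial PSD kernel, $K$ cannot be PD if two $\bz_i$ coincide, since then two rows would be equal).
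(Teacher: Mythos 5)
Your proof is correct and is essentially identical to the paper's: both take an arbitrary vanishing linear combination, expand its squared $\mathcal{H}$-norm via $\langle \Phi(\bz_i),\Phi(\bz_j)\rangle_{\mathcal{H}} = k(\bz_i,\bz_j)$ to obtain $\mathbf{c}^T K \mathbf{c}=0$, and invoke positive definiteness of $K$. Your closing remark about distinctness being subsumed by the positive-definiteness hypothesis is a reasonable observation not made in the paper, but it does not change the argument.
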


\begin{proof}
$\sum_{i=1}^m \alpha_i \Phi(\bz_i)= 0$ implies
\begin{align*}
0 & = \biggl\|\sum_{i=1}^m \alpha_i \Phi(\bz_i)\biggr\|_\mathcal{H}^2\\
& = \biggl\langle \sum_{i=1}^m \alpha_i \Phi(\bz_i), \sum_{j=1}^m \alpha_j \Phi(\bz_j) \biggr\rangle_\mathcal{H}\\
& = \sum_{i=1}^m \sum_{j=1}^m \alpha_i\alpha_j k(\bz_i, \bz_j)
\end{align*}
and from positive definiteness of $K$, $\alpha_1 = \cdots = \alpha_m = 0$.
\end{proof}

\begin{lemma}\label{lem:lindep}
Let $\mathcal{H}$ be a RKHS associated with a kernel $k$, and $\bx_1$, $\bx_2$, and $\bx_3$ be distinct points in $\reals^d$. Assume that $K =(k(\bx_i, \bx_j))_{i,j=1}^3$ is positive definite. For any $g, h \in \mathcal{H}$ with $g \neq h$, $\Phi(\bx_i) -g$
and $\Phi(\bx_i) - h$ are linearly independent for some $i \in \{1, 2, 3\}$.
\end{lemma}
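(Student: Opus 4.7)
The plan is to argue by contradiction. Assume that for every $i \in \{1,2,3\}$ the pair $\Phi(\bx_i) - g$ and $\Phi(\bx_i) - h$ is linearly dependent, and derive a contradiction with the fact that $\Phi(\bx_1), \Phi(\bx_2), \Phi(\bx_3)$ are linearly independent; the latter follows from Lemma \ref{lem:lindep1} and the assumed positive definiteness of $K$.

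First, for each $i$, linear dependence means that there exist scalars $\alpha_i, \beta_i$, not both zero, such that
\[
\alpha_i (\Phi(\bx_i) - g) + \beta_i (\Phi(\bx_i) - h) = 0,
\]
which rearranges to $(\alpha_i + \beta_i)\Phi(\bx_i) = \alpha_i g + \beta_i h$. The key step is to rule out the degenerate situation $\alpha_i + \beta_i = 0$. If this occurred, then $\alpha_i g = -\beta_i h = \alpha_i h$, and because $(\alpha_i, \beta_i) \neq (0,0)$ we would have $\alpha_i \neq 0$, forcing $g = h$ and contradicting the hypothesis. Hence $\alpha_i + \beta_i \neq 0$, and we may divide through to obtain
\[
\Phi(\bx_i) \;=\; \frac{\alpha_i}{\alpha_i + \beta_i}\, g \;+\; \frac{\beta_i}{\alpha_i + \beta_i}\, h,
\]
so each $\Phi(\bx_i)$ lies in $\mathrm{span}\{g, h\}$, a subspace of dimension at most $2$.

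To finish, Lemma \ref{lem:lindep1} (applied to the three distinct points $\bx_1, \bx_2, \bx_3$ with positive definite Gram matrix $K$) gives linear independence of $\Phi(\bx_1), \Phi(\bx_2), \Phi(\bx_3)$. Three linearly independent vectors cannot all lie in a subspace of dimension at most $2$, which is the desired contradiction.

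The only delicate point — the ``main obstacle'' — is the case analysis on $\alpha_i + \beta_i$: the hypothesis $g \neq h$ is precisely what is needed to exclude the $\alpha_i + \beta_i = 0$ branch, and once this is dispatched the argument reduces to a dimension count against Lemma \ref{lem:lindep1}.
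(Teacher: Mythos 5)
Your proof is correct, and it takes a genuinely cleaner route than the paper's. Both arguments proceed by contradiction and both rely on Lemma \ref{lem:lindep1} to get linear independence of $\Phi(\bx_1), \Phi(\bx_2), \Phi(\bx_3)$, but the mechanisms differ. The paper keeps the dependence relations in the form $\alpha_i(\Phi(\bx_i)-g) + \beta_i(\Phi(\bx_i)-h) = \bzero$ and then runs a two-case analysis ($\alpha_2 = 0$ versus $\alpha_2 \neq 0$), in each case solving for $g$ or $h$ as a combination of two of the feature vectors and deriving a contradiction by comparing coefficients in the resulting identities. You instead observe that once $\alpha_i + \beta_i \neq 0$ is established (which you correctly derive from $g \neq h$, exactly as the paper notes in passing), each relation can be solved for $\Phi(\bx_i)$ itself, placing all three feature vectors in $\mathrm{span}\{g,h\}$, a subspace of dimension at most $2$ --- an immediate contradiction with their linear independence. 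Your normalization-plus-dimension-count eliminates the case analysis entirely and makes transparent why three points are needed in the hypothesis; the paper's version is more computational but arrives at the same place. No gaps: the one delicate step, ruling out $\alpha_i + \beta_i = 0$, is handled correctly.
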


\begin{proof}
We will prove the lemma by contradiction. Suppose $\Phi(\bx_i) -g$ and $\Phi(\bx_i) - h$ are linearly dependent for all $i = 1, 2, 3$. Then,
there exists $(\alpha_i, \beta_i) \neq (0, 0)$ for $i = 1, 2, 3$ such that
\begin{align}
\label{eqn:ld1}\alpha_1(\Phi(\bx_1) - g) &+ \beta_1 (\Phi(\bx_1) - h) = \bzero\\
\label{eqn:ld2}\alpha_2(\Phi(\bx_2) - g) &+ \beta_2 (\Phi(\bx_2) - h) = \bzero\\
\label{eqn:ld3}\alpha_3(\Phi(\bx_3) - g) &+ \beta_3 (\Phi(\bx_3) - h) = \bzero.
\end{align}
Note that $\alpha_i  + \beta_i \neq 0$ since $g \neq h$.

First consider the case $\alpha_2 = 0$. This gives $h = \Phi(\bx_2)$, and $\alpha_1 \neq 0$ and $\alpha_3 \neq 0$. Then, (\ref{eqn:ld1})
and (\ref{eqn:ld2}) simplify to
\begin{align*}
g = \frac{\alpha_1+\beta_1}{\alpha_1} \Phi(\bx_1) - \frac{\beta_1}{\alpha_1}\Phi(\bx_2),\\
g = \frac{\alpha_3+\beta_3}{\alpha_3} \Phi(\bx_3) - \frac{\beta_3}{\alpha_3}\Phi(\bx_2),
\end{align*}
respectively. This is contradiction because $\Phi(\bx_1)$, $\Phi(\bx_2)$, and $\Phi(\bx_3)$ are linearly independent by Lemma \ref{lem:lindep1} and
\begin{equation*}
\frac{\alpha_1+\beta_1}{\alpha_1} \Phi(\bx_1) + \biggl(\frac{\beta_3}{\alpha_3}-\frac{\beta_1}{\alpha_1}\biggr)\Phi(\bx_2)
- \frac{\alpha_3+\beta_3}{\alpha_3} \Phi(\bx_3) = \bzero
\end{equation*}
where $(\alpha_1+\beta_1)/\alpha_1 \neq 0$.

Now consider the case where $\alpha_2 \neq 0$. Subtracting (\ref{eqn:ld2}) multiplied by $\alpha_1$ from (\ref{eqn:ld1}) multiplied
by $\alpha_2$ gives
\begin{equation*}
(\alpha_1 \beta_2 - \alpha_2 \beta_1) h = - \alpha_2(\alpha_1 + \beta_1) \Phi(\bx_1) + \alpha_1( \alpha_2 + \beta_2)\Phi(\bx_2).
\end{equation*}
In the above equation $\alpha_1 \beta_2 - \alpha_2 \beta_1 \neq 0$ because this implies $\alpha_2(\alpha_1 + \beta_1) = 0$ and
$\alpha_1( \alpha_2 + \beta_2) = 0$, which, in turn, implies $\alpha_2 = 0$. Therefore, $h$ can be expressed as
$h = \lambda_1 \Phi(\bx_1) + \lambda_2 \Phi(\bx_2)$ where
\begin{align*}
\lambda_1 = -\frac{\alpha_2(\alpha_1+\beta_1)}{\alpha_1\beta_2 -\alpha_2\beta_1},\quad \lambda_2
= \frac{\alpha_1(\alpha_2+\beta_2)}{\alpha_1\beta_2 -\alpha_2\beta_1}.
\end{align*}
Similarly, from (\ref{eqn:ld2}) and (\ref{eqn:ld3}), $h = \lambda_3 \Phi(\bx_2) + \lambda_4 \Phi(\bx_3)$ where
\begin{align*}
\lambda_3 = -\frac{\alpha_3(\alpha_2+\beta_2)}{\alpha_2\beta_3 -\alpha_3\beta_2}, \quad \lambda_4
= \frac{\alpha_2(\alpha_3+\beta_3)}{\alpha_2\beta_3 -\alpha_3\beta_2}.
\end{align*}
Therefore, we have $h = \lambda_1 \Phi(\bx_1) + \lambda_2 \Phi(\bx_2) = \lambda_3 \Phi(\bx_2) + \lambda_4 \Phi(\bx_3)$.
Again, from the linear independence of $\Phi(\bx_1)$, $\Phi(\bx_2)$, and $\Phi(\bx_3)$, we have $\lambda_1 = 0$,
$\lambda_2 = \lambda_3$, $\lambda_4 = 0$. However, $\lambda_1 =0$ leads to $\alpha_2 = 0$.

Therefore, $\Phi(\bx_i) -g$ and $\Phi(\bx_i) - h$ are linearly independent for some $i \in \{1, 2, 3\}$.
\end{proof}

\begin{lemma}\label{lemma:lemma_compact1}
Given $\bX_1, \dots, \bX_n$, let $\mathcal{D}_n \subset \mathcal{H}$ be defined as
\begin{equation*}
\mathcal{D}_n = \biggl\{g \, \bigg| \, g  = \sum_{i=1}^n w_i \cdot\Phi(\bX_i), \quad w_i \geq 0, \quad\sum_{i=1}^n w_i =1\biggr\}
\end{equation*}
Then, $\mathcal{D}_n$ is compact.
\end{lemma}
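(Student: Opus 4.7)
The plan is to recognize $\mathcal{D}_n$ as the image of the standard simplex in $\reals^n$ under a continuous map into $\mathcal{H}$, and then apply the fact that a continuous image of a compact set is compact.

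More specifically, I would introduce the $(n-1)$-dimensional probability simplex
\begin{equation*}
\Delta_{n-1} = \biggl\{\bw = (w_1, \dots, w_n) \in \reals^n \,\bigg|\, w_i \geq 0,\ \sum_{i=1}^n w_i = 1\biggr\}
\end{equation*}
and the map $T : \reals^n \to \mathcal{H}$ defined by $T(\bw) = \sum_{i=1}^n w_i \, \Phi(\bX_i)$. By construction $\mathcal{D}_n = T(\Delta_{n-1})$. The set $\Delta_{n-1}$ is closed and bounded in $\reals^n$, hence compact by Heine--Borel.

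Next I would verify that $T$ is continuous. Using the triangle inequality in $\mathcal{H}$ and the fact that $\|\Phi(\bX_i)\|_\mathcal{H} = \tau$ for radial kernels (as noted earlier in Section \ref{sec:rkde}),
\begin{equation*}
\|T(\bw) - T(\bw')\|_\mathcal{H} = \biggl\|\sum_{i=1}^n (w_i - w'_i)\Phi(\bX_i)\biggr\|_\mathcal{H} \leq \tau \sum_{i=1}^n |w_i - w'_i|,
\end{equation*}
so $T$ is Lipschitz continuous from $(\reals^n, \|\cdot\|_1)$ into $\mathcal{H}$, hence continuous.

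The conclusion then follows immediately: the continuous image of a compact set is compact, so $\mathcal{D}_n = T(\Delta_{n-1})$ is compact in $\mathcal{H}$. There is no real obstacle here; the only point requiring any care is that $\mathcal{H}$ may be infinite-dimensional, which is why we pass through continuity of $T$ rather than trying to invoke closed-and-bounded directly in $\mathcal{H}$.
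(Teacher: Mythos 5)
Your proof is correct and follows exactly the paper's argument: both express $\mathcal{D}_n$ as the image of the compact probability simplex under the map $\bw \mapsto \sum_i w_i \Phi(\bX_i)$ and invoke the fact that a continuous image of a compact set is compact. The only difference is that you explicitly verify continuity via a Lipschitz bound (which the paper merely asserts), a minor but welcome addition.
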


\begin{proof}
Define
\begin{equation*}
A = \biggl\{(w_1, \dots, w_n) \in \reals^n \biggl| \, w_i \geq 0, \quad \sum_{i=1}^n w_i = 1\biggr\},
\end{equation*}
and a mapping $W$
\begin{equation*}
W : (w_1, \dots, w_n) \in A \to \sum_{i=1}^n w_i \cdot\Phi(\bX_i) \in \mathcal{H}.
\end{equation*}
Note that $A$ is compact, $W$ is continuous, and $\mathcal{D}_n$ is the image of $A$ under $W$. Since the continuous image of
a compact space is also compact \citep{munkres00}, $\mathcal{D}_n$ is compact.
\end{proof}

\subsection{Proof of Lemma \ref{lemma:v_mu}}\label{ssec:proof_lemma_v_mu}
We begin by calculating the Gateaux differential of $J$. We consider the two cases: $\Phi(\bx) -(g + \alpha h) = \bzero$ and $\Phi(\bx) -(g + \alpha h) \neq \bzero$.

For $\Phi(\bx) -(g + \alpha h) \neq \bzero$,
\begin{eqnarray}\label{eqn:deriv_rho1}
\nonumber \lefteqn{\frac{\partial}{\partial \alpha} \rho\bigl(\|\Phi(\bx) -(g + \alpha h)\|_{\mathcal{H}}\bigr)}\\
\nonumber & = & \psi\bigl(\|\Phi(\bx) -(g + \alpha h)\|_{\mathcal{H}}\bigr)\cdot\frac{\partial}{\partial \alpha}
\|\Phi(\bx) -(g + \alpha h)\|_{\mathcal{H}}\\
\nonumber & = & \psi\bigl(\|\Phi(\bx) -(g + \alpha h)\|_{\mathcal{H}}\bigr)\cdot\frac{\partial}{\partial \alpha}
\sqrt{\|\Phi(\bx) -(g + \alpha h)\|_{\mathcal{H}}^2}\\
\nonumber & = & \psi\bigl(\|\Phi(\bx) -(g + \alpha h)\|_{\mathcal{H}}\bigr)\cdot\frac{\frac{\partial}{\partial \alpha}
\|\Phi(\bx) -(g + \alpha h)\|_{\mathcal{H}}^2}{2\sqrt{\|\Phi(\bx) -(g + \alpha h)\|_{\mathcal{H}}^2}} \\
\nonumber & = & \frac{\psi\bigl(\|\Phi(\bx) -(g + \alpha h)\|_{\mathcal{H}}\bigr)}{2\|\Phi(\bx) -(g + \alpha h)\|_{\mathcal{H}}}
\cdot\frac{\partial}{\partial \alpha} \biggl(\|\Phi(\bx) - g\|_{\mathcal{H}}^2 -2 \bigl\langle \Phi(\bx) - g, \alpha h
\bigr\rangle_\mathcal{H} + \alpha^2 \|h\|_{\mathcal{H}}^2\biggr) \\
\nonumber & = & \frac{\psi\bigl(\|\Phi(\bx) -(g + \alpha h)\|_{\mathcal{H}}\bigr)}{\|\Phi(\bx) -(g + \alpha h)\|_{\mathcal{H}}}
\cdot\biggl( - \bigl\langle \Phi(\bx) - g, h \bigr\rangle_\mathcal{H} + \alpha \|h\|_{\mathcal{H}}^2\biggr)\\
& = & \varphi\bigl(\|\Phi(\bx) -(g + \alpha h)\|_{\mathcal{H}}\bigr)\cdot\bigl( - \bigl\langle \Phi(\bx) - (g+\alpha h), h \bigr\rangle_\mathcal{H} \bigr).
\end{eqnarray}
For $\Phi(\bx) -(g + \alpha h) = \bzero$,
\begin{eqnarray}\label{eqn:deriv_rho2}
\nonumber \lefteqn{\frac{\partial}{\partial \alpha} \rho\bigl(\|\Phi(\bx) -(g + \alpha h)\|_{\mathcal{H}}\bigr)}\\
\nonumber & = & \lim_{\delta \to 0}\frac{\rho\bigl(\|\Phi(\bx) -(g + (\alpha +\delta) h)\|_{\mathcal{H}}\bigr) -
\rho\bigl(\|\Phi(\bx) -(g + \alpha h)\|_{\mathcal{H}}\bigr)}{\delta}\\
\nonumber & = & \lim_{\delta \to 0}\frac{\rho\bigl(\|\delta h\|_{\mathcal{H}}\bigr) - \rho\bigl(0\bigr)}{\delta}\\
\nonumber & = & \lim_{\delta \to 0}\frac{\rho\bigl(\delta \|h\|_{\mathcal{H}}\bigr)}{\delta}\\
\nonumber & = &
\begin{cases}
\lim_{\delta \to 0}\frac{\rho(0)}{\delta}, \quad & h = \bzero\\
\lim_{\delta \to 0}\frac{\rho(\delta \|h\|_{\mathcal{H}})}{\delta \|h\|_{\mathcal{H}}}\cdot \|h\|_{\mathcal{H}}, \quad & h \neq \bzero
\end{cases}\\
\nonumber & = & 0 \\
& = & \varphi\bigl(\|\Phi(\bx) -(g + \alpha h)\|_{\mathcal{H}}\bigr)\cdot\bigl( - \bigl\langle \Phi(\bx) - (g+\alpha h), h \bigr\rangle_\mathcal{H} \bigr)
\end{eqnarray}
where the second to the last equality comes from (A1) and the last equality comes from the facts that $\Phi(\bx) - (g+\alpha h) = \bzero$ and $\varphi(0)$ is well-defined by (A2).

From (\ref{eqn:deriv_rho1}) and (\ref{eqn:deriv_rho2}), we can conclude that for any $g$, $h \in \mathcal{H}$, and $\bx \in \reals^d$,
\begin{eqnarray}\label{enq:deriv_rho}
\nonumber \lefteqn{\frac{\partial}{\partial \alpha} \rho\bigl(\|\Phi(\bx) -(g + \alpha h)\|_{\mathcal{H}}\bigr)}\\
& = &\varphi\bigl(\|\Phi(\bx) -(g + \alpha h)\|_{\mathcal{H}}\bigr)\cdot \bigl( - \bigl\langle \Phi(\bx) - (g + \alpha h), h \bigr\rangle_\mathcal{H} \bigr)
\end{eqnarray}
Therefore,
\begin{eqnarray}
\nonumber \lefteqn{\delta J(g; h) = \frac{\partial}{\partial \alpha} J(g+ \alpha h) \bigl |_{\alpha = 0}}\\
\nonumber & = &\frac{\partial}{\partial \alpha} \biggl(\frac{1}{n} \sum_{i=1}^n \rho\bigl(\|\Phi(\bX_i) -(g + \alpha h)\|_\mathcal{H}\bigr)\biggr) \biggl |_{\alpha = 0}\\
\nonumber & = &\frac{1}{n} \sum_{i=1}^n \frac{\partial}{\partial \alpha} \rho\bigl(\|\Phi(\bX_i) -(g + \alpha h)\|_\mathcal{H}\bigr) \biggl |_{\alpha = 0}\\
\nonumber & = &\frac{1}{n} \sum_{i=1}^n \varphi\bigl(\|\Phi(\bX_i) -(g + \alpha h)\|_\mathcal{H}\bigr)
\cdot\bigl( - \bigl \langle \Phi(\bX_i) - (g+\alpha h), h \bigr\rangle_\mathcal{H} \bigr) \biggl |_{\alpha = 0}\\
\nonumber & = & - \frac{1}{n} \sum_{i=1}^n \varphi\bigl(\|\Phi(\bX_i) - g\|_\mathcal{H}\bigr)\cdot
\bigl \langle \Phi(\bX_i) - g, h \bigr\rangle_\mathcal{H}\\
\nonumber & = &- \biggl \langle  \frac{1}{n} \sum_{i=1}^n \varphi\bigl(\|\Phi(\bX_i) - g\|_\mathcal{H}\bigr)
\cdot \bigl(\Phi(\bX_i) - g\bigr) , h \biggr\rangle_\mathcal{H}\\
\nonumber & = &- \bigl \langle V(g) , h \bigr\rangle_\mathcal{H}.
\end{eqnarray}

The necessary condition for $g$ to be a minimizer of $J$, i.e.,  $g = \widehat{f}_{RKDE}$, is that $\delta J(g; h) = 0,
\quad \forall h \in \mathcal{H}$, which leads to $V(g) = \bzero$.

\subsection{Proof of Theorem \ref{thm:representer}}\label{ssec:proof_thm_representer}
From Lemma \ref{lemma:v_mu}, $V(\widehat{f}_{RKDE}) = \bzero$, that is,
$$
\frac{1}{n}\sum_{i=1}^n \varphi(\|\Phi(\bX_i) -
\widehat{f}_{RKDE}\|_{\mathcal{H}})\cdot(\Phi(\bX_i) - \widehat{f}_{RKDE})
= \bzero.
$$
Solving for $\widehat{f}_{RKDE}$,
we have $\widehat{f}_{RKDE} = \sum_{i=1}^n w_i \Phi(\bX_i)$ where
\begin{equation*}
w_i = \biggl(\sum_{j=1}^n \varphi(\|\Phi(\bX_j) - \widehat{f}_{RKDE}\|_{\mathcal{H}})\biggr)^{-1}
\cdot\varphi(\|\Phi(\bX_i) - \widehat{f}_{RKDE}\|_{\mathcal{H}}).
\end{equation*}
Since $\rho$ is non-decreasing, $w_i \geq 0$. Clearly $\sum_{i=1}^n w_i =
1$


\subsection{Proof of Lemma \ref{lemma:strict_convex}}\label{ssec:strict_convex}
$J$ is strictly convex on $\mathcal{H}$ if for any $0 < \lambda <1 $, and $g, h \in \mathcal{H}$ with $g \neq h$
\begin{equation*}
J(\lambda g + (1-\lambda) h) < \lambda J(g) + (1-\lambda) J(h).
\end{equation*}
Note that
\begin{align*}
J(\lambda g + (1-\lambda) h) & = \frac{1}{n} \sum_{i=1}^n \rho\bigl(\|\Phi(\bX_i) - \lambda g - (1-\lambda) h\|_{\mathcal{H}}\bigr)\\
& = \frac{1}{n} \sum_{i=1}^n \rho\bigl(\|\lambda(\Phi(\bX_i) -  g) + (1-\lambda) (\Phi(\bX_i)- h)\|_{\mathcal{H}}\bigr)\\
& \leq \frac{1}{n} \sum_{i=1}^n \rho\bigl(\lambda\|\Phi(\bX_i) -  g\|_{\mathcal{H}} + (1-\lambda)\|\Phi(\bX_i)- h\|_{\mathcal{H}}\bigr)\\
& \leq \frac{1}{n} \sum_{i=1}^n \lambda \rho\bigl(\|\Phi(\bX_i) -  g\|_{\mathcal{H}}\bigr) + (1-\lambda)\rho\bigl(\|\Phi(\bX_i)- h\|_{\mathcal{H}}\bigr)\\
& = \lambda J(g)+ (1-\lambda) J(h).
\end{align*}
The first inequality comes from the fact that $\rho$ is non-decreasing and
\begin{equation*}
\|\lambda(\Phi(\bX_i) -  g) + (1-\lambda) (\Phi(\bX_i)- h)\|_{\mathcal{H}} \leq \lambda\|\Phi(\bX_i) -  g\|_{\mathcal{H}} +
(1-\lambda)\|\Phi(\bX_i)- h\|_{\mathcal{H}},
\end{equation*}
and the second inequality comes from the convexity of $\rho$.

Under condition \emph{(i)}, $\rho$ is strictly convex and thus the second inequality is strict, implying $J$ is strictly convex. Under condition \emph{(ii)}, we will show that
the first inequality is strict using proof by contradiction. Suppose the first inequality holds with equality. Since $\rho$ is strictly
increasing, this can happen only if
\begin{equation*}\label{eqn:equal}
\|\lambda(\Phi(\bX_i) -  g) + (1-\lambda) (\Phi(\bX_i)- h)\|_{\mathcal{H}} = \lambda\|\Phi(\bX_i) -  g\|_{\mathcal{H}} +
(1-\lambda)\|\Phi(\bX_i)- h\|_{\mathcal{H}},
\end{equation*}
for $i = 1, \dots, n$. Equivalently, it can happen only if $(\Phi(\bX_i) -  g)$ and $(\Phi(\bX_j) -  h)$
are linearly dependent for all $i = 1, \dots, n$. However, from $n \geq 3$ and positive definiteness of $K$, there exist three distinct $\bX_i$'s, say $\bZ_1$, $\bZ_2$, and $\bZ_3$ with positive definite $K' = (k_\sigma(\bZ_i, \bZ_j))_{i,j=1}^3$. By Lemma \ref{lem:lindep}, it must be the case that for some $i \in \{1, 2, 3\}$, $(\Phi(\bZ_i) -  g)$ and $(\Phi(\bZ_i) -  h)$ are linearly independent. Therefore, the inequality is strict, and thus
$J$ is strictly convex.

\subsection{Proof of Theorem \ref{thm:convergence_bm}}\label{ssec:proof_thm_convergence_bm}
First, we will prove the monotone decreasing property of $J(f^{(k)})$. Given $r \in \reals$, define
\begin{align*}
u(x; r) = \rho(r) - \frac{1}{2} r \psi(r) + \frac{1}{2}\varphi(r) x^2.
\end{align*}
If $\varphi$ is nonincreasing, then $u$ is a surrogate function of $\rho$, having the following property \citep{huber81}:
\begin{align}
\label{eqn:surrogate1} u(r; r) & = \rho(r)\\
\label{eqn:surrogate2} u(x; r) &\geq \rho(x), \quad \forall x.
\end{align}
Define
\begin{equation*}
Q(g; f^{(k)}) = \frac{1}{n}\sum_{i=1}^n u\bigl(\|\Phi(\bX_i) - g\|_\mathcal{H}, \|\Phi(\bX_i) - f^{(k)}\|_\mathcal{H}\bigr).
\end{equation*}
Note that since $\psi$ and $\varphi$ are continuous, $Q(\,\cdot\,;\,\cdot\,)$ is continuous in both arguments.

From (\ref{eqn:surrogate1}) and (\ref{eqn:surrogate2}), we have
\begin{align}
\nonumber Q(f^{(k)}; f^{(k)}) & = \frac{1}{n}\sum_{i=1}^n u\bigl(\|\Phi(\bX_i) - f^{(k)}\|_\mathcal{H}, \|\Phi(\bX_i) - f^{(k)}\|_\mathcal{H}\bigr)\\
\nonumber & = \frac{1}{n}\sum_{i=1}^n \rho(\|\Phi(\bX_i) - f^{(k)}\|_\mathcal{H})\\
\label{eqn:Q1}& = J(f^{(k)})
\end{align}
and
\begin{align}
\nonumber Q(g; f^{(k)}) & = \frac{1}{n}\sum_{i=1}^n u\bigl(\|\Phi(\bX_i) - g\|_\mathcal{H}, \|\Phi(\bX_i) - f^{(k)}\|_\mathcal{H}\bigr)\\
\nonumber & \geq \frac{1}{n}\sum_{i=1}^n \rho\bigl(\|\Phi(\bX_i) - g\|_\mathcal{H})\\
\label{eqn:Q2}& = J(g), \quad \forall g \in \mathcal{H}
\end{align}
The next iterate $f^{(k+1)}$ is the minimizer of $Q(g; f^{(k)})$ since
\begin{align}
\nonumber f^{(k+1)} & = \sum_{i=1}^n w_i^{(k)}\Phi(\bX_i)\\
\nonumber & = \sum_{i=1}^n \frac{\varphi(\|\Phi(\bX_i) - f^{(k)}\|_\mathcal{H})}{\sum_{j=1}^n\varphi(\|\Phi(\bX_j) - f^{(k)}\|_\mathcal{H})}\Phi(\bX_i)\\
\nonumber & = \argmin_{g \in \mathcal{H}} \sum_{i=1}^n \varphi(\|\Phi(\bX_i) - f^{(k)}\|_\mathcal{H}) \cdot \|\Phi(\bX_i) -g \|_\mathcal{H}^2\\
\label{eqn:Q3} & = \argmin_{g \in \mathcal{H}} Q(g; f^{(k)})
\end{align}
From (\ref{eqn:Q1}), (\ref{eqn:Q2}), and (\ref{eqn:Q3}),
\begin{equation*}
J(f^{(k)}) = Q(f^{(k)}; f^{(k)}) \geq Q(f^{(k+1)}; f^{(k)}) \geq J(f^{(k+1)})
\end{equation*}
and thus $J(f^{(k)})$ monotonically decreases at every iteration. Since $\{J(f^{(k)})\}_{k=1}^\infty$ is bounded below by $0$, it converges.

Next, we will prove that every limit point $f^*$ of $\{f^{(k)}\}_{k=1}^\infty$ belongs to $\mathcal{S}$. Since the sequence
$\{f^{(k)}\}_{k=1}^\infty$ lies in the compact set $\mathcal{D}_n$ (see Theorem \ref{thm:representer} and Lemma \ref{lemma:lemma_compact1}), it has a convergent subsequence $\{f^{(k_l)}\}_{l=1}^\infty$.
Let $f^*$ be the limit of $\{f^{(k_l)}\}_{l=1}^\infty$. Again, from (\ref{eqn:Q1}), (\ref{eqn:Q2}), and (\ref{eqn:Q3}),
\begin{align*}
Q(f^{(k_{l+1})} ; f^{(k_{l+1})} )& = J(f^{(k_{l+1})})\\
& \leq J(f^{(k_{l}+1)})\\
& \leq Q(f^{(k_{l}+1)} ; f^{(k_{l})})\\
& \leq Q(g; f^{(k_{l})}) \quad, \forall g \in \mathcal{H},
\end{align*}
where the first inequality comes from the monotone decreasing property of $J(f^{(k)})$.
By taking the limit on the both side of the above inequality, we have
\begin{align*}
Q(f^* ; f^*) &\leq Q(g; f^*) \quad, \forall g \in \mathcal{H}.
\end{align*}
Therefore,
\begin{align*}
f^* & = \argmin_{g \in \mathcal{H}} Q(g; f^*)\\
& = \sum_{i=1}^n \frac{\varphi(\|\Phi(\bX_i) - f^*\|_\mathcal{H})}{\sum_{j=1}^n\varphi(\|\Phi(\bX_j) - f^*\|_\mathcal{H})}\Phi(\bX_i)
\end{align*}
and thus
\begin{align*}
\sum_{i=1}^n \varphi(\|\Phi(\bX_i) - f^*\|_\mathcal{H})\cdot(\Phi(\bX_i)-f^*) = \bzero.
\end{align*}
This implies $f^* \in \mathcal{S}$.

Now we will prove $\|f^{(k)} - \mathcal{S}\|_\mathcal{H} \to 0$ by contradiction. Suppose $\inf_{g \in \mathcal{S}} \|f^{(k)} - g\|_\mathcal{H}
\nrightarrow 0$. Then, there exists $\epsilon > 0$ such that $\forall K \in \mathbb{N}$, $\exists k > K$ with $\inf_{g \in \mathcal{S}}
\|f^{(k)} - g\|_\mathcal{H} \geq \epsilon$. Thus, we can construct an increasing sequence of indices $\{k_l\}_{l=1}^\infty$ such that
$\inf_{g \in \mathcal{S}} \|f^{(k_l)} - g\|_\mathcal{H} \geq \epsilon$ for all $l= 1, 2, \dots$. Since $\{f^{(k_l)}\}_{l=1}^\infty$ lies
in the compact set $\mathcal{D}_n$, it has a subsequence converging to some $f^\dag$, and we can choose $j$ such that
$\|f^{(k_j)} - f^\dag\|_\mathcal{H} < \epsilon/2$. Since $f^\dag$ is also a limit point of $\{f^{(k)}\}_{k=1}^\infty$, $f^\dag \in \mathcal{S}$.
This is a contradiction because
\begin{align*}
\epsilon \leq \inf_{g \in \mathcal{S}} \|f^{(k_j)} - g\|_\mathcal{H} \leq \|f^{(k_j)} - f^\dag\|_\mathcal{H} \leq \epsilon/2.
\end{align*}

\subsection{Proof of Theorem \ref{thm:influence_true}}\label{ssec:proof_thm_influence_true}
Since the RKDE is given as $\widehat{f}_{RKDE}(\bx; F) = \langle \Phi(\bx), f_{F} \rangle_\mathcal{H}$, the influence function for the RKDE is
\begin{align*}
IF(\bx, \bx'; \widehat{f}_{RKDE}, F) &= \lim_{s\to 0} \frac{\widehat{f}_{RKDE}(\bx; F_s) - \widehat{f}_{RKDE}(\bx; F)}{s}\\
& = \lim_{s\to 0} \frac{\langle \Phi(\bx), f_{F_s} \rangle_\mathcal{H} - \langle \Phi(\bx), f_{F} \rangle_\mathcal{H}}{s}\\
& = \biggl \langle \Phi(\bx), \lim_{s \to 0} \frac{f_{F_s} - f_{F}}{s} \biggr \rangle_\mathcal{H}
\end{align*}
and thus we need to find $\dot{f}_{F} \triangleq \lim_{s \to 0} \frac{f_{F_s} - f_{F}}{s}$.

As we generalize the definition of RKDE from $\widehat{f}_{RKDE}$ to $f_{F}$, the necessary condition $V(\widehat{f}_{RKDE})$ also generalizes. However, a few things must be taken care of since we are dealing with integral instead of summation. Suppose $\psi$ and $\varphi$ are bounded by $B'$ and $B''$, respectively. Given a probability measure $\mu$, define
\begin{equation}\label{eqn:extended_J}
J_{\mu}(g) = \int \rho(\|\Phi(\bx) - g\|_\mathcal{H}) \, d\mu(\bx).
\end{equation}
From (\ref{enq:deriv_rho}),
\begin{eqnarray}\label{eqn:delta_J}
\nonumber \lefteqn{\delta J_{\mu}(g; h) = \frac{\partial}{\partial \alpha} J_{\mu}(g+ \alpha h) \bigl |_{\alpha = 0}}\\
\nonumber & = &\frac{\partial}{\partial \alpha} \int \rho\bigl(\|\Phi(\bx) -(g + \alpha h)\|_\mathcal{H}\bigr)\, d\mu(\bx) \biggl |_{\alpha = 0}\\
\nonumber & = &\int \frac{\partial}{\partial \alpha} \rho\bigl(\|\Phi(\bx) -(g + \alpha h)\|_\mathcal{H}\bigr)\, d\mu(\bx) \biggl |_{\alpha = 0}\\
\nonumber & = &\int \varphi\bigl(\|\Phi(\bx) -(g + \alpha h)\|_\mathcal{H}\bigr)
\cdot\bigl( - \bigl \langle \Phi(\bx) - (g+\alpha h), h \bigr\rangle_\mathcal{H} \bigr) \, d\mu(\bx) \biggl |_{\alpha = 0}\\
\nonumber & = & - \int \varphi\bigl(\|\Phi(\bx) - g\|_\mathcal{H}\bigr)\cdot
\bigl \langle \Phi(\bx) - g, h \bigr\rangle_\mathcal{H} \, d\mu(\bx)\\
\nonumber & = &- \int \biggl \langle  \varphi\bigl(\|\Phi(\bx) - g\|_\mathcal{H}\bigr)
\cdot \bigl(\Phi(\bx) - g\bigr) , h \biggr\rangle_\mathcal{H} \, d\mu(\bx).
\end{eqnarray}
The exchange of differential and integral is valid \citep{serge93} since for any fixed $g, h \in \mathcal{H}$, and $\alpha \in (-1, 1)$
\begin{eqnarray*}
\lefteqn{\biggl|\frac{\partial}{\partial \alpha} \rho\bigl(\|\Phi(\bx) -(g + \alpha h)\|_\mathcal{H}\bigr)\biggr|}\\
& = & \varphi\bigl(\|\Phi(\bx) -(g + \alpha h)\|\bigr)\cdot
\bigl| - \bigl\langle \Phi(\bx) - (g+\alpha h), h \bigr\rangle_\mathcal{H}\bigr|\\
& \leq & B''\cdot \|\Phi(\bx) - (g+\alpha h)\|\cdot \|h\|_\mathcal{H}\\
& \leq & B''\cdot \bigl(\|\Phi(\bx)\|_\mathcal{H} + \|g\|_\mathcal{H} + \|h\|_\mathcal{H}\bigr) \cdot \|h\|_\mathcal{H}\\
& \leq & B''\cdot \bigl(\tau + \|g\|_\mathcal{H} + \|h\|_\mathcal{H}\bigr)\cdot \|h\|_\mathcal{H} < \infty.
\end{eqnarray*}

Since $\varphi(\|\Phi(\bx) - g\|_\mathcal{H})\cdot \bigl(\Phi(\bx) - g\bigr)$ is strongly integrable, i.e.,
\begin{equation*}
\int \bigl \| \varphi\bigl(\|\Phi(\bx) - g\|_\mathcal{H}\bigr)\cdot
\bigl(\Phi(\bx) - g\bigr)\bigr\|_\mathcal{H} \, d\mu(\bx) \leq B' < \infty,
\end{equation*}
its Bochner-integral \citep{berlinet04}
\begin{equation*}
V_{\mu} (g) \triangleq \int \varphi(\|\Phi(\bx) - g\|_{\mathcal{H}})\cdot(\Phi(\bx) - g) \, d\mu(\bx)
\end{equation*}
is well-defined. Therefore, we have
\begin{align*}
\delta J_{\mu}(g; h) & = - \biggl \langle \int \varphi\bigl(\|\Phi(\bx) - g\|_{\mathcal{H}}\bigr)
\cdot \bigl(\Phi(\bx) - g\bigr) \, d\mu(\bx) , h \biggr\rangle_\mathcal{H}\\
& = - \bigl \langle V_{\mu}(g) , h \bigr\rangle_\mathcal{H}.
\end{align*}
and $V_{\mu}(f_{\mu}) = \bzero$.


From the above condition for $f_{F_s}$, we have
\begin{align*}
\bzero & = V_{F_s}(f_{F_s})\\
& = (1-s)\cdot V_F(f_{F_s}) + s V_{\delta_{\bx'}}(f_{F_s}), \quad \forall s \in [0, 1)
\end{align*}
Therefore,
\begin{align*}
\bzero & = \lim_{s\to 0} (1-s)\cdot V_F(f_{F_s}) + \lim_{s \to 0} s \cdot V_{\delta_{\bx'}}(f_{F_s})\\
& = \lim_{s\to 0} V_F(f_{F_s}).
\end{align*}
Then,
\begin{align}\label{eqn:icderiv1}
\nonumber \bzero = & \lim_{s \to 0} \frac{1}{s} \biggl(V_{F_s}(f_{F_s}) - V_F(f_{F}) \biggr)\\
\nonumber = & \lim_{s \to 0} \frac{1}{s} \biggl((1-s)V_F(f_{F_s}) + sV_{\delta_{\bx'}}(f_{F_s}) - V_F(f_{F}) \biggr)\\
\nonumber =& \lim_{s \to 0} \frac{1}{s} \biggl(V_F(f_{F_s}) - V_F(f_{F})\biggr) - \lim_{s\to 0} V_F(f_{F_s})
+ \lim_{s\to 0} V_{\delta_{\bx'}}(f_{F_s})\\
\nonumber =& \lim_{s \to 0} \frac{1}{s} \biggl(V_F(f_{F_s}) - V_F(f_{F})\biggr) + \lim_{s\to 0} V_{\delta_{\bx'}}(f_{F_s})\\
\nonumber = & \lim_{s \to 0} \frac{1}{s} \biggl(V_F(f_{F_s}) - V_F(f_{F})\biggr) + \lim_{s\to 0} \varphi(\|\Phi(\bx') - f_{F_s}\|)\cdot(\Phi(\bx')-f_{F_s})\\
= & \lim_{s \to 0} \frac{1}{s} \biggl(V_F(f_{F_s}) - V_F(f_{F})\biggr) + \varphi(\|\Phi(\bx') - f_{F}\|)\cdot(\Phi(\bx')-f_{F}).
\end{align}
where the last equality comes from the facts that $f_{F_s} \to f_{F}$ and continuity of $\varphi$.

Let $U$ denote the mapping $\mu \mapsto f_{\mu}$. Then,
\begin{align}\label{eqn:icderiv4}
\nonumber \dot{f}_{F} & \triangleq \lim_{s \to 0} \frac{f_{F_s} - f_{F}}{s}\\
\nonumber & = \lim_{s \to 0} \frac{U(F_s) - U(F)}{s}\\
\nonumber & = \lim_{s \to 0} \frac{U\bigl((1-s)F + s \delta_{\bx'}\bigr) - U(F)}{s}\\
\nonumber & = \lim_{s \to 0} \frac{U\bigl(F + s (\delta_{\bx'}-F)\bigr) - U(F)}{s}\\
& = \delta U(F; \delta_{\bx'} - F)
\end{align}
where $\delta U(P; Q)$ is the Gateaux differential of $U$ at $P$ with increment $Q$. The first term in (\ref{eqn:icderiv1}) is
\begin{eqnarray}\label{eqn:icderiv2}
\nonumber\lefteqn{\lim_{s \to 0} \frac{1}{s} \biggl(V_F\bigl(f_{F_s}\bigr) - V_F\bigl(f_{F}\bigr)\biggr)}\\
\nonumber& = & \lim_{s \to 0} \frac{1}{s} \biggl(V_F\bigl(U(F_s)\bigr) - V_F\bigl(U(F)\bigr)\biggr)\\
\nonumber& = & \lim_{s \to 0} \frac{1}{s} \biggl((V_F\circ U) \bigl(F_s) - (V_F\circ U)(F)\biggr)\\
\nonumber& = & \lim_{s \to 0} \frac{1}{s} \biggl((V_F\circ U) \bigl(F+s(\delta_{\bx'} - F)\bigr) - (V_F\circ U)(F)\biggr)\\
\nonumber& = & \delta (V_F\circ U) (F; \delta_{\bx'} - F)\\
\nonumber& = & \delta V_F\bigl(U(F); \delta U(F; \delta_{\bx'} - F)\bigr)\\
& = & \delta V_F\bigl(f_{F}; \dot{f}_{F}\bigr)
\end{eqnarray}
where we apply the chain rule of Gateaux differential, $\delta (G\circ H) (u; x) = \delta G(H(u); \delta H(u; x))$, in the second to the last equality. Although $\dot{f}_{F}$ is technically not a Gateaux differential since the space of probability distributions is not a vector space, the chain rule still applies.

Thus, we only need to find the Gateaux differential of $V_F$. For $g, h \in \mathcal{H}$
\begin{eqnarray}\label{eqn:icderiv3}
\nonumber \lefteqn{\delta V_F (g; h) = \lim_{s\to 0} \frac{1}{s} \biggl(V_F(g + s\cdot h) - V_F(g)\biggr)}\\
\nonumber& = & \lim_{s\to 0} \frac{1}{s} \biggl(\int \varphi(\|\Phi(\bx) - g - s\cdot h\|_\mathcal{H})
\cdot (\Phi(\bx) - g - s\cdot h) dF(\bx)\\
\nonumber& & \qquad \qquad- \int \varphi(\|\Phi(\bx) - g\|_\mathcal{H}) \cdot (\Phi(\bx) - g) dF(\bx)\biggr)\\
\nonumber& = & \lim_{s\to 0} \frac{1}{s} \int \biggl(\varphi(\|\Phi(\bx) - g - s\cdot h\|_\mathcal{H})
 - \varphi(\|\Phi(\bx) -g\|_\mathcal{H})\biggr)\cdot (\Phi(\bx) -g) dF(\bx)\\
\nonumber& &  - \lim_{s\to 0} \frac{1}{s} \int \biggl(\varphi(\|\Phi(\bx) - g - s\cdot h\|_\mathcal{H})
\cdot s\cdot h\biggr)\, dF(\bx)\\
\nonumber& = & \int \lim_{s\to 0} \frac{1}{s} \biggl(\varphi(\|\Phi(\bx) - g - s\cdot h\|_\mathcal{H})
- \varphi(\|\Phi(\bx) -g\|_\mathcal{H})\biggr)\cdot (\Phi(\bx) -g) dF(\bx)\\
\nonumber& & - h \cdot \int \lim_{s\to 0}  \varphi(\|\Phi(\bx) -g - s\cdot h\|_\mathcal{H})\, dF(\bx)\\
\nonumber& = & - \int \biggl(\frac{\psi'(\|\Phi(\bx) - g\|_\mathcal{H})\cdot\|\Phi(\bx) - g\|_\mathcal{H} - \psi(\|\Phi(\bx) - g\|_\mathcal{H})}
{\|\Phi(\bx) - g\|_\mathcal{H}^2}\cdot\frac{\langle h, \Phi(\bx) - g \rangle_\mathcal{H}}{\|\Phi(\bx) - g\|_\mathcal{H}}\biggr)\\
\nonumber && \quad \quad \quad\cdot \bigl(\Phi(\bx) -g\bigr)\, dF(\bx)\\
& & - h \cdot  \int \varphi(\|\Phi(\bx) -g\|_\mathcal{H})\, dF(\bx)
\end{eqnarray}
where in the last equality, we use the fact
\begin{equation*}
\frac{\partial}{\partial s} \varphi(\|\Phi(\bx) - g - s\cdot h\|_\mathcal{H}) = \varphi'(\|\Phi(\bx) - g - s\cdot h\|_\mathcal{H})\cdot\frac{\langle \Phi(\bx) - g - s\cdot h, h\rangle_\mathcal{H}}{\|\Phi(\bx) - g - s\cdot h\|_\mathcal{H}}
\end{equation*}
and
\begin{equation*}
\varphi'(x) = \frac{d}{d x} \frac{\psi(x)}{x} = \frac{\psi'(x)x - \psi(x)}{x^2}.
\end{equation*}
The exchange of limit and integral is valid due to the dominated convergence theorem since under the assumption that $\varphi$ is bounded and
Lipschitz continuous with Lipschitz constant $L$,
\begin{eqnarray*}
\bigl|\varphi(\|\Phi(\bx) - g - s\cdot h\|) \bigr|< \infty, \quad \forall \bx
\end{eqnarray*}
and
\begin{eqnarray*}
\lefteqn{\biggl\|\frac{1}{s} \biggl(\varphi(\|\Phi(\bx) - g - s\cdot h\|_\mathcal{H}) -
\varphi(\|\Phi(\bx) -g\|_\mathcal{H})\biggr)\cdot \bigl(\Phi(\bx) -g\bigr)\biggr\|_\mathcal{H}}\\
& = & \frac{1}{s} \bigl |\varphi(\|\Phi(\bx) -g - s\cdot h\|_\mathcal{H}) -
\varphi(\|\Phi(\bx) -g\|_\mathcal{H})\bigr| \cdot \|\Phi(\bx) -g\|_\mathcal{H}\\
& \leq & \frac{1}{s} L \cdot \|s \cdot h\|_\mathcal{H} \cdot \bigl(\|\Phi(\bx)\|_\mathcal{H} + \|g\|_\mathcal{H}\bigl) \\
& \leq & L \cdot \| h\|_\mathcal{H} \cdot \bigl(\|\Phi(\bx)\|_\mathcal{H} + \|g\|_\mathcal{H}\bigl) < \infty, \quad \forall \bx.
\end{eqnarray*}

By combining (\ref{eqn:icderiv1}), (\ref{eqn:icderiv4}), (\ref{eqn:icderiv2}), and (\ref{eqn:icderiv3}), we have
\begin{eqnarray*}
\nonumber \lefteqn{\biggl(\int \varphi(\|\Phi(\bx) - f_{F}\|) dF \biggr)\cdot \dot{f}_{F} }\\
\nonumber & + & \int \biggl(\frac{\bigl\langle \dot{f}_{F}, \Phi(\bx) - f_{F} \bigr\rangle_\mathcal{H}}{\|\Phi(\bx) - f_{F}\|^3} \cdot
q(\|\Phi(\bx) - f_{F}\|)\cdot \bigl(\Phi(\bx) - f_{F}\bigr)\biggr) dF(\bx)\\
& = & (\Phi(\bx') - f_{F})\cdot \varphi(\|\Phi(\bx') - f_{F}\|)
\end{eqnarray*}
where $q(x) = x\psi'(x) - \psi(x)$.

\subsection{Proof of Theorem \ref{thm:influence_emp}}\label{ssec:proof_thm_influnece_emp}
With $F_n$ instead of $F$, (\ref{eqn:icfinal}) becomes
\begin{eqnarray}\label{eqn:m_sigma_dot}
\nonumber \lefteqn{\biggl(\frac{1}{n} \sum_{i = 1}^n \varphi(\|\Phi(\bX_i) - f_{F_n}\|)\biggr)\cdot \dot{f}_{F_n} }\\
\nonumber & + &\frac{1}{n} \sum_{i=1}^n \biggl(\frac{\bigl\langle \dot{f}_{F_n}, \Phi(\bX_i) - f_{F_n} \bigr\rangle_\mathcal{H}}{\|\Phi(\bX_i)
- f_{F_n}\|^3} \cdot q(\|\Phi(\bX_i) - f_{F_n}\|)\cdot \bigl(\Phi(\bX_i) - f_{F_n}\bigr)\biggr)\\
& = & (\Phi(\bx') - f_{F_n})\cdot \varphi(\|\Phi(\bx') - f_{F_n}\|).
\end{eqnarray}
Let  $r_i = \|\Phi(\bX_i) - f_{F_n}\|$, $r' = \|\Phi(\bx') - f_{F_n}\|$, $\gamma = \sum_{i=1}^n \varphi(r_i)$ and
\begin{equation*}
d_i = \bigl \langle \dot{f}_{F_n}, \Phi(\bX_i) - f_{F_n} \bigr \rangle_\mathcal{H} \cdot \frac{q(r_i)}{r_i^3}.
\end{equation*}
Then, (\ref{eqn:m_sigma_dot}) simplifies to
\begin{equation*}
\gamma \cdot \dot{f}_{F_n} + \sum_{i=1}^n d_i \cdot \bigl(\Phi(\bX_i) - f_{F_n}\bigr)  =  n\cdot(\Phi(\bx') - f_{F_n})\cdot\varphi(r')
\end{equation*}
Since $f_{F_n} = \sum_{i=1}^n w_i \Phi(\bX_i)$, we can see that $\dot{f}_{F_n}$ has a form of $\sum_{i=1}^n \alpha_i \Phi(\bX_i) +
\alpha' \Phi(\bx')$. By substituting this, we have
\begin{eqnarray*}
\lefteqn{\gamma \sum_{j=1}^n \alpha_j \Phi(\bX_j) + \gamma \cdot\alpha' \Phi(\bx') + \sum_{i=1}^n d_i \biggl(\Phi(\bX_i) -
\sum_{k=1}^n w_k \Phi(\bX_k)\biggr)}\\
& = & n\cdot\biggl(\Phi(\bx') - \sum_{k=1}^n w_k \Phi(\bX_k)\biggr)\cdot \varphi(r').
\end{eqnarray*}
Since $K'$ is positive definite, $\Phi(\bX_i)$'s and $\Phi(\bx')$ are linearly independent (see Lemma \ref{lem:lindep1}). Therefore, by comparing the coefficients of the $\Phi(\bX_j)$'s and $\Phi(\bx')$ in both sides, we have
\begin{gather}
\label{eqn:alpha_j} \gamma\cdot\alpha_j + d_j - w_j\cdot \biggl(\sum_{i=1}^n d_i\biggr)  = -w_j\frac{\psi(r')}{r'}\cdot n\\
\label{eqn:alpha_prime}\gamma\alpha' = n\cdot\varphi(r').
\end{gather}

From (\ref{eqn:alpha_prime}), $\alpha' = n \varphi(r')/\gamma$. Let $q_i = q(r_i)/r_i^3$ and $\Phi(\bX_i) - f_{F_n} = \sum_{k=1}^n w_{k, i}\Phi(\bX_k)$
where
\begin{equation*}
w_{k, i} =
\begin{cases}
- w_k&, \quad k \neq i\\
1 - w_k&, \quad k = i.
\end{cases}
\end{equation*}
Then,
\begin{align*}
d_i & = \frac{q(r_i)}{r_i^3}\biggl\langle \dot{f}_{F_n}, \Phi(\bX_i) - f_{F_n}\biggr\rangle_\mathcal{H}\\
& = q_i \biggl\langle \sum_{j=1}^n \alpha_j \Phi(\bX_j) + \alpha' \Phi(\bx'), \sum_{k=1}^n w_{k, i}\Phi(\bX_k) \biggr\rangle_\mathcal{H}\\
& = q_i \biggl( \sum_{j=1}^n \sum_{k=1}^n \alpha_j w_{k, i} k_\sigma(\bX_j, \bX_k) + \alpha' \sum_{k=1}^n  w_{k, i} k_\sigma(\bx', \bX_k) \biggr)\\
& = q_i (\be_i - \bw)^T K \balpha + q_i \alpha' \cdot(\be_i - \bw)^T \bk'\\
& = q_i (\be_i - \bw)^T \bigl(K \balpha + \alpha' \bk'\bigr)
\end{align*}
where $K : = (k_\sigma(\bX_i, \bX_j))_{i, j=1}^n$ is a kernel matrix, $\be_i$ denotes the $i$th standard basis vector, and
$\bk' = [k_\sigma(\bx', \bX_1, \dots, k_\sigma(\bx', \bX_n)]^T$. By letting $Q = diag([q_1, \dots, q_n])$,
\begin{equation*}
\bd = Q\cdot(I_n - \bones \bw^T) (K\balpha + \alpha' \cdot \bk').
\end{equation*}
Thus, (\ref{eqn:alpha_j}) can be expressed in matrix-vector form,
\begin{gather*}
\gamma \balpha + Q\cdot(I_n - \bones\cdot \bw^T) (K\balpha + \alpha' \cdot \bk') - \bw \cdot \bigl(\bones^TQ\cdot(I_n - \bones\cdot \bw^T)
(K\balpha + \alpha' \cdot \bk')\bigr)\\
 = - n\cdot\bw \varphi(r').
\end{gather*}
Thus, $\balpha$ can be found solving the following linear system of equations,
\begin{eqnarray*}
\lefteqn{\biggl\{\gamma I_n + (I_n - \bones\cdot\bw^T)^T Q\cdot(I_n - \bones \cdot\bw^T) \cdot K \biggr\} \balpha} \\
& = & -n \cdot \varphi(r') \bw  - \alpha' (I_n - \bones\cdot\bw^T)^T Q\cdot(I_n - \bones\cdot \bw^T)\bk'.
\end{eqnarray*}
Therefore,
\begin{align*}
IF(\bx, \bx'; \widehat{f}_{RKDE}, F_n) & = \biggl \langle \Phi(\bx), \dot{f}_{F_n} \biggr \rangle_\mathcal{H}\\
& = \biggl \langle \Phi(\bx), \sum_{i=1}^n \alpha_i \Phi(\bX_i) + \alpha' \Phi(\bx') \biggr \rangle_\mathcal{H}\\
& = \sum_{i=1}^n \alpha_i k_\sigma(\bx, \bX_i) + \alpha' k_\sigma(\bx, \bx').
\end{align*}

The condition $\lim_{s\to 0}f_{F_{n,s}} = f_{F_n}$ is implied by the strict convexity of $J$. Given $\bX_1, \dots, \bX_n$ and $\bx'$, define
$\mathcal{D}_{n+1}$ as in Lemma \ref{lemma:lemma_compact1}. From Theorem \ref{thm:representer}, $f_{F_n,s}$ and $f_{F_n}$ are in $\mathcal{D}_{n+1}$. With the definition in (\ref{eqn:extended_J}),
\begin{align*}
J_{F_{n,s}}(g) & = \int \rho(\|\Phi(\bx) - g\|_\mathcal{H}) \, dF_{n,s}(\bx)\\
& = \frac{(1-s)}{n} \sum_{i=1}^n \rho(\|\Phi(\bX_i) - g\|_\mathcal{H}) + s\cdot\rho(\|\Phi(\bx') - g\|_\mathcal{H}).
\end{align*}
Note that $J_{F_{n,s}}$ uniformly converges to $J$ on $\mathcal{D}_{n+1}$, i.e, $\sup_{g \in \mathcal{D}_{n+1}}|J_{F_{n,s}}(g) - J(g)| \to 0$ as $s\to 0$, since for any $g \in \mathcal{D}_{n+1}$
\begin{eqnarray*}
\lefteqn{\bigl|J_{F_{n,s}}(g) - J(g)\bigr|}\\
&= &\biggl|\frac{(1-s)}{n} \sum_{i=1}^n \rho(\|\Phi(\bX_i) - g\|_\mathcal{H}) + s\cdot\rho(\|\Phi(\bx') - g\|_\mathcal{H}) - \frac{1}{n} \sum_{i=1}^n \rho(\|\Phi(\bX_i) - g\|_\mathcal{H})\biggr|\\
&= &\frac{s}{n} \sum_{i=1}^n \rho(\|\Phi(\bX_i) - g\|_\mathcal{H}) + s\cdot\rho(\|\Phi(\bx') - g\|_\mathcal{H})\\
&\leq &\frac{s}{n} \sum_{i=1}^n \rho(2\tau) + s\cdot\rho(2\tau)\\
&= &2s\cdot\rho(2\tau)
\end{eqnarray*}
where in the inequality we use the fact that $\rho$ is nondecreasing and
\begin{align*}
\|\Phi(\bx) - g\|_\mathcal{H} &\leq \|\Phi(\bx)\| + \|g\|_\mathcal{H}\\
&\leq 2\tau.
\end{align*}
since $g \in \mathcal{D}_{n+1}$, and by the triangle inequality.

Now, let $\epsilon > 0$ and $B_\epsilon (f_{F_n}) \subset \mathcal{H}$ be the open ball centered at $f_{F_n}$ with radius $\epsilon$. Since $\mathcal{D}_{n+1}^\epsilon \triangleq \mathcal{D}_{n+1}\setminus B_\epsilon (f_{F_n})$ is also compact, $\inf_{g \in \mathcal{D}_{n+1}^\epsilon} J(g)$ is attained by some $g^* \in \mathcal{D}_{n+1}^\epsilon$ by the extreme value theorem \citep{adams08}. Since $f_{F_n}$ is unique,
$M_\epsilon =  J(g^*) - J(f_{F_n})> 0$. For sufficiently small $s$, $\sup_{g \in \mathcal{D}_{n+1}}|J_{F_{n,s}}(g) - J(g)| < M_\epsilon/2$ and thus
\begin{equation*}
J(g) - \frac{M_\epsilon}{2} < J_{F_{n,s}}(g) < J(g) + \frac{M_\epsilon}{2}, \quad \forall g \in \mathcal{D}_{n+1}.
\end{equation*}
Therefore,
\begin{align*}
\inf_{g \in \mathcal{D}_{n+1}^\epsilon} J_{F_{n,s}}(g) & > \inf_{g \in \mathcal{D}_{n+1}^\epsilon} J(g) -\frac{M_\epsilon}{2}\\
& = J(g^*) - \frac{M_\epsilon}{2}\\
& = J(f_{F_n}) + M_\epsilon - \frac{M_\epsilon}{2}\\
& = J(f_{F_n}) + \frac{M_\epsilon}{2}\\
& > J_{F_{n,s}}(f_{F_n})
\end{align*}
Since the minimum of $J_{F_{n,s}}$ is not attained on $\mathcal{D}_{n+1}^\epsilon$, $f_{F_{n, s}} \in B_\epsilon (f_{F_n})$. Since $\epsilon$ is arbitrary, $\lim_{s \to 0} f_{F_{n,s}} = f_{F_n}$.
\mbox{}\vspace*{1ex}
\mbox{}

\vskip 0.2in
\bibliographystyle{icml2011}
\bibliography{Paper_Kim_Clay}

\end{document}